\documentclass{l4dc2026}

\title[Robust Out-of-Distribution MPC via Conformalized System Level Synthesis]{Safety Beyond the Training Data: Robust Out-of-Distribution MPC via Conformalized System Level Synthesis}
\usepackage{times}
\usepackage{graphics}
\usepackage{sidecap}
\usepackage{booktabs}
\usepackage{multirow}

\usepackage{amsmath,amsfonts,bm}

\def\1{\bm{1}}

\newcommand{\calib}{\mathcal{D}_{\mathrm{calib}}}

\newcommand{\tvd}{d_\textrm{TV}}

\DeclareMathAlphabet{\mathsfit}{\encodingdefault}{\sfdefault}{m}{sl}
\SetMathAlphabet{\mathsfit}{bold}{\encodingdefault}{\sfdefault}{bx}{n}

\usepackage{multicol}
\usepackage{wrapfig}
\usepackage{algorithm,algpseudocode}
\usepackage{caption}
\usepackage{subcaption}

\newcommand{\defeq}{:=}

\newenvironment{proofsketch}{%
  \proof}{\endproof}

\renewcommand{\l}{_\textrm{lin}}

\newcommand{\methodnospace}{CP-SLS-MPC}
\newcommand{\method}{CP-SLS-MPC }

\newcommand{\Z}{\mathbf{z}}
\newcommand{\V}{\mathbf{v}}

\renewcommand{\nu}{n_\mathrm{u}}

\newcommand{\Qf}{P}

\newcommand{\f}{\mathrm{f}}
\newcommand{\covariance}{\mathbf{\Sigma}}

\newcommand{\T}{^\top}

\newcommand{\Px}{\bm{\Phi}^{\mathrm{x}}}
\newcommand{\Pu}{\bm{\Phi}^{\mathrm{u}}}
\newcommand{\F}{\mathcal{F}}

\author{%
 \Name{Anutam Srinivasan}$^{1}$ \Email{asrinivasan350@gatech.edu}\\
 \Name{Antoine Leeman}$^{2}$ \Email{aleeman@ethz.ch}\\
 \Name{Glen Chou}$^1$ \Email{chou@gatech.edu}\\
 \addr $^1$Georgia Institute of Technology. $^2$ ETH Zurich.%
}
\usepackage{xcolor}

\begin{document}

\maketitle
\vspace{-20pt}
\begin{abstract}%
We present a novel framework for robust out-of-distribution planning and control using conformal prediction (CP) and system level synthesis (SLS), addressing the challenge of ensuring safety and robustness when using learned dynamics models beyond the training data distribution. We first derive high-confidence model error bounds using weighted CP with a learned, state-control-dependent covariance model. These bounds are integrated into an SLS-based robust nonlinear model predictive control (MPC) formulation, which performs constraint tightening over the prediction horizon via volume-optimized forward reachable sets. We provide theoretical guarantees on coverage and robustness under distributional drift, and analyze the impact of data density and trajectory tube size on prediction coverage. Empirically, we demonstrate our method on nonlinear systems of increasing complexity, including a 4D car and a {12D} quadcopter, improving safety and robustness compared to fixed-bound and non-robust baselines, especially outside of the data distribution.
\end{abstract}

\begin{keywords}%
  safe learning-based control, conformal prediction, MPC, system level synthesis%
\end{keywords}

\vspace{-12pt}
\section{Introduction}
\vspace{-3pt}

\looseness-1Provably-safe planning under unknown dynamics is crucial for real-world robotics. This poses difficulties for classical planners, which assume precise models; in practice, model error and uncertainty are unavoidable.
Data-driven planners can learn dynamics from data but may exploit model inaccuracies, leading to unsafe behavior. 
This motivates uncertainty-constrained planners that quantify prediction error between true and learned dynamics to ensure robust, safe execution \citep{DBLP:journals/ral/KnuthCOB21, DBLP:conf/icra/KnuthCRM23, wasiela2024learning}. 
These methods estimate spatially-varying model error bounds to compute reachable tubes, which are used to constrain closed-loop behavior within a ``trusted domain" where the model is accurate. This domain is typically selected as a bounded region around the training data, preventing generated plans from straying far from the data. However, in practice, data scarcity often forces robots to operate temporarily out of distribution (OOD), tolerating degraded performance while maintaining safety.
Moreover, these planners are slow and do not exploit gradient information to efficiently optimize closed-loop behavior that actively reduces model uncertainty.

\looseness-1To address these gaps, we propose \methodnospace, a fast optimization-based robust feedback motion planner that uses the learned dynamics to jointly plan open-loop nominal trajectories and closed-loop tracking controllers via robust model predictive control (MPC). Our method uses weighted conformal prediction (CP) to bound the model error and system level synthesis (SLS) \citep{anderson2019system} to compute reachable sets, which are used to constrain the true closed-loop system to be safe with high probability, enabling robust temporary OOD operation. \method updates the model error bounds with online data to encourage cautious behavior when OOD, such as staying further from obstacles. 
\method also uses gradient data from a model error predictor to efficiently guide plans toward low-error regions, enabling near-real-time replanning. 
Our contributions are:

\vspace{-8pt}
\begin{itemize}
    \itemsep-0.35em
    \item A fast optimization-based MPC method with learned dynamics ensuring finite-sample, probabilistic safety of the true system using state-control-dependent CP error bounds in the MPC.
    \item Theoretical analysis for high-probability safety and bounding the OOD coverage gap.
    \item Numerical validation on a simulated nonlinear 4D car and 12D quadcopter, improving safety and prediction accuracy even in OOD scenarios over uncertainty-unware MPC baselines.
\end{itemize}

\vspace{-25pt}
\section{Related Work}
\vspace{-4pt}

\paragraph{Predictive Control}
\looseness-1Learning-based MPC unifies disturbance estimation with planning \citep{hewing2020learning}, but typically lacks state-dependent error bounds and cannot steer plans towards low-error regions. Robust \citep{rawlings2017model} and tube MPC \citep{mayne2005robust} ensure constraint satisfaction under disturbance, but in typical formulations, directly enforcing constraints on the closed-loop trajectory is nonconvex, leading to conservative overapproximations \citep{singh2018robust}. In contrast, SLS \citep{anderson2019system,chen2024robust}, or disturbance-feedback MPC \citep{GOULART2006523}, provides a convex parameterization of closed-loop responses for linear time-varying (LTV) systems, enabling constraint satisfaction under disturbance \citep{bartos2025stochastic}. SLS also extends to data-driven settings \citep{xue2021data,furieri2022neural} and nonlinear dynamics \citep{leeman2025robust_TAC}, where the problem is decomposed into an optimization over nominal trajectories and closed-loop LTV error dynamics. In this work, we inform SLS with state-dependent, CP-calibrated error bounds to enable safe planning toward low-error regions.

\vspace{-8pt}
\paragraph{Planning with Learned Dynamics} 
\looseness-1Many model-based reinforcement learning methods plan with learned models but lack safety guarantees \citep{DBLP:journals/ftml/MoerlandBPJ23}. Some methods estimate reachable tubes for safe exploration \citep{berkenkamp2016safe}, but assume a controller is given. The most closely related method \citep{DBLP:conf/cdc/ChouOB21, chou2022safe} safely plans with learned dynamics by constraining invariant tubes to lie in a ``trusted domain" near training data. This demands dense data coverage; in contrast, we enable the robot to move beyond the training data while assuring probabilistic safety. Subsequent model error-aware planners \citep{DBLP:conf/corl/SuhCDYGT23} improve planning speed but lack formal guarantees. Also closely related is \cite{marques2024quantifyingaleatoricepistemicdynamics}, which bounds model error via CP but cannot extrapolate OOD as it uses \textit{exchangeable} CP. Other methods aim to improve dynamics predictions in OOD scenarios \citep{DBLP:journals/corr/abs-2403-12245} but do not focus on planning.

\vspace{-6pt}
\paragraph{Conformal Prediction (CP)}
\looseness-1CP \citep{vovk2005algorithmic} is a distribution-free method for constructing prediction regions from data from black-box models, making it well-suited for safe trajectory prediction and planning \citep{DBLP:journals/ral/LindemannCSP23, DBLP:conf/l4dc/DixitLWCPB23, DBLP:conf/nips/SunJQNKS23, DBLP:conf/cdc/MuthaliSDLRFT23, muenprasitivej2025probabilistically}. The most closely related methods are \cite{DBLP:conf/l4dc/ContrerasSS25}, which triggers a safe fallback policy in OOD scenarios for MPC, and \cite{DBLP:conf/cdc/CheeHP23, DBLP:conf/l4dc/CheeSHP24}, which use WCP to address time-series correlations in MPC for tracking fixed trajectories. However, these methods do not use state-dependent uncertainty, and form prediction sets using a fixed spatially-invariant norm ball. Thus, the MPC cannot distinguish between state-space regions where the model is accurate and those where it is inaccurate, thereby degrading performance. In contrast, our method designs trajectories that are valid OOD and actively encourages uncertainty reduction, steering the MPC toward regions of lower model error, using a learned state-dependent error covariance matrix.

\vspace{-10pt}
\section{Preliminaries and Problem Statement}\label{sec:preliminaries}
\vspace{-4pt}

Let $f: \mathcal{X} \times \mathcal{U} \rightarrow \mathcal{X}$ be the true unknown deterministic nonlinear discrete-time dynamics,

\vspace{-9pt}
\begin{equation}\label{eq:dynamics}
    x_{k+1} = f(x_k, u_k) = \hat f(x_k, u_k) + \varepsilon(x_k, u_k),
\end{equation}
\vspace{-13pt}

\looseness-1\noindent where $\mathcal{X} \subseteq \mathbb{R}^{n_x}$ is the state space and $\mathcal{U} \subseteq \mathbb{R}^{n_u}$ is the control
space. Let $\hat f: \mathcal{X} \times \mathcal{U} \rightarrow \mathcal{X}$ denote an approximate learned dynamics model, where $\varepsilon: \mathcal{X} \times \mathcal{U} \rightarrow \mathcal{X}$ is the dynamics error. We model $\hat f$ with a neural network (NN), though our method is agnostic to model class. The goal is to reach a target set $\mathcal{X}_\textrm{f} \subseteq \mathcal{X}$ while satisfying pointwise-in-time state and control constraints $(x_k, u_k) \in \mathcal{F}$, where

\vspace{-18pt}
\begin{equation}\label{eq:constraints}
    \mathcal{F} := \{(x, u) \in \mathbb{R}^{n_x + n_u} \mid g_i(x,u) + b_i \le 0,\quad \forall i\in\{1,\ldots,n_c\}\text{,\quad for }g_i:\mathbb{R}^{n_x + n_u} \to \mathbb{R}\}.
\end{equation}
\vspace{-16pt}

\noindent Let $\mathcal{B}^m \subseteq \mathbb{R}^m $ be the unit 2-norm ball. Let the Minkowski sum be $A \oplus B := \{a + b\mid a \in A, b \in B\}$.

\vspace{-5pt}

\paragraph{Conformal Prediction (CP)}
We use weighted CP to convert raw non-conformity scores, quantifying a mismatch between between $\hat f(x,u)$ and $f(x,u)$, into a calibrated threshold that yields $(1-\alpha)$ coverage \citep{vovk2005algorithmic}. Let $\calib:={(x_i,u_i,f(x_i,u_i))}_{i=1}^n$ and choose a nonconformity score function  $s:\mathcal{X}\times\mathcal{U}\times\mathcal{X}\to\mathbb{R}_{\ge0}$. Set $s_i:=s(x_i,u_i,f(x_i,u_i))$. Calibration proceeds in three steps. We
(i) form the weighted empirical distribution $\hat{S}$ with weights, $\{\tilde{w}_{1}, \ldots, \tilde{w}_{n}, \tilde w_\textrm{test}\} \subseteq [0,1]^{n+1}$,

\vspace{-8pt}
\begin{equation}\label{eq:empirical_distribution}\textstyle
\hat{S}:=\sum_{i=1}^n \tilde w_i\delta_{s_i}+\tilde w_{\textrm{test}}\delta_{+\infty},\quad \textrm{s.t. }
\tilde w_{\textrm{test}}+\sum_{i=1}^n \tilde w_i=1,
\end{equation}
\vspace{-13pt}

\noindent where $\delta_{s_i}$ is the Dirac delta centered at $s_i$, (ii) calibrate the threshold $q_{1-\alpha}:=Q_{1-\alpha}(\hat{S})$ with the $1-\alpha$ quantile $Q_{1-\alpha}(\cdot)$, and (iii) define the conformal prediction set (CP set) at $(x_{\textrm{test}},u_{\textrm{test}})$ as

\vspace{-10pt}
\begin{equation}\label{eq:prediction_set}
    \mathcal{C}(x_\textrm{test}, u_\textrm{test}; q_{1-\alpha}):= \{x \in \mathcal{X} \mid s(x_\textrm{test}, u_\textrm{test}, x) \leq q_{1-\alpha}\}.
\end{equation}
\vspace{-15pt}

\noindent\looseness-1 Traditional (non-exchangeable) CP uses uniform weights, $\tilde w_i = \tilde w_\text{test} = \frac{1}{n+1}$,  and assumes exchangeability, which may fail under spatio-temporal distribution shifts; WCP does not assume this. 
We denote $(s_1, s_2, \ldots, s_\textrm{test}) \sim S^\textrm{test}$; moreover, let $S^i$ be the joint distribution obtained by replacing $s_i$ with the unknown test score, i.e., $(s_1, s_2, \ldots, s_{i-1}, s_\textrm{test}, s_{i+1}, \ldots, s_n) \sim S^i$. WCP guarantees
\vspace{-5pt}
\begin{equation}\label{eq:cp_guarantee}
\Pr\big[f(x_{\textrm{test}},u_{\textrm{test}})\in \mathcal{C}(x_{\textrm{test}},u_{\textrm{test}};q_{1-\alpha})\big]
\ge 1-\alpha-\textstyle\sum_{i=1}^n \tilde w_i\tvd(S^{i},S^\textrm{test}),
\end{equation}
\vspace{-17pt}

\looseness-1\noindent where $d_\textrm{TV}(\cdot)$ is the total-variation distance and $\textstyle\sum_{i=1}^n \tilde w_i\tvd(S^{i},S^\textrm{test})$ quantifies the coverage gap due to distribution shift (it vanishes under exchangeability). In practice, the weights can be selected to trade off coverage gap size $\sum_i \tilde w_i\tvd(S^{i},S^\textrm{test})$ and CP set size $|\mathcal{C}(x_{\textrm{test}},u_{\textrm{test}};q_{1-\alpha})|$, yielding uncertainty sets suitable for robust MPC with learned dynamics. 
WCP is essential for planning with learned dynamics to account for (1) distribution shifts between training and execution data, (2) state/control correlations during execution, and (3) visits to OOD regions of the state/control space.

\vspace{-8pt}
\paragraph{SLS} SLS \citep{anderson2019system} is a robust control framework for uncertain LTV systems

\vspace{-10pt}
\begin{equation}\label{eq:ltv_dynamics}
x_{k+1} = A_kx_{k} + B_ku_{k} + E_k\xi_k    
\end{equation}
\vspace{-15pt}

\looseness-1\noindent where $A_k\in\mathbb{R}^{n_x \times n_x}$, $B_k\in\mathbb{R}^{n_x \times n_u}$, and $E_k \in \mathbb{R}^{n_x \times n_x}$, and $\xi_k \in \mathcal{B}^{n_x}$ is a disturbance. SLS can be used to jointly optimize a nominal trajectory, a tracking controller, and a closed-loop reachable tube overapproximation computed by propagating worst-case disturbances over time. In particular, we denote a nominal length-$T$ state trajectory $\textbf{z} := \{z_1, \ldots, z_T\} \in \mathbb{R}^{n_x T}$ and control trajectory $\textbf{v} := \{v_1, \ldots, v_T\} \in \mathbb{R}^{n_u T}$ satisfying the nominal LTV dynamics in \eqref{eq:nominal_ltv}. To track the nominal $(\textbf{z}, \textbf{v})$, SLS defines a disturbance-feedback controller \eqref{eq:ctrl_sls} and closed-loop state response \eqref{eq:state_sls}, where $\Pu_{k,j} \in \mathbb{R}^{n_u \times n_x}$ and $\Px_{k,j} \in \mathbb{R}^{n_x \times n_x}$ define the closed-loop response of \eqref{eq:ltv_dynamics} as an affine function of $\{\xi_k\}$,
\vspace{-30pt}
\begin{multicols}{3}
    \begin{equation}\label{eq:nominal_ltv}
    z_{k+1} = A_k z_k + B_k v_k,
\end{equation}

    \begin{equation}\label{eq:ctrl_sls}
    u_k = v_k + \textstyle\sum_{j=1}^{k-1}\Pu_{k,j}\xi_j,
  \end{equation}

  \begin{equation}\label{eq:state_sls}
    x_k = z_k + \textstyle\sum_{j=1}^{k-1}\Px_{k,j}\xi_j.
  \end{equation}
\end{multicols}
\vspace{-5pt}

\looseness-1\noindent To ensure that \eqref{eq:ctrl_sls}-\eqref{eq:state_sls} represent the closed-loop state/control response, $\Px_{k,j}$ and $\Pu_{k,j}$ must satisfy 

\vspace{-8pt}
\begin{equation}\label{eq:sls_affine}
    \Px_{k+1,j} = A_k\Px_{k,j} + B_k\Pu_{k,j},\qquad \Px_{k+1,k} = E_k.
\end{equation}
\vspace{-13pt}

\noindent With SLS, we can guarantee that the closed-loop trajectories of \eqref{eq:state_sls} and \eqref{eq:ctrl_sls} remain in the reachable tubes 
$\mathcal{R} := \{\mathcal{R}_k:= (\mathcal{R}_k^x, \mathcal{R}_k^u)\}_{k=1}^T$, i.e., for all $k \in \{1, \ldots, T\}$, $x_k \in \mathcal{R}_k^x$ and $u_k \in \mathcal{R}_k^u$, where

\vspace{-5pt}
\begin{equation}\label{eq:sls_tubes}\textstyle
    \mathcal{R}^{x}_k := (\bigoplus_{j=0}^{k-1}\Px_{k,j}\mathcal{B}^{n_x})\oplus \{z_k\}, \quad \mathcal{R}^{u}_k := (\bigoplus_{j=0}^{k-1}\Pu_{k,j}\mathcal{B}^{n_x})\oplus \{v_k\}.
\end{equation}
\vspace{-15pt}

\noindent SLS has also been extended to control nonlinear dynamics \citep{leeman2025robust_TAC,zhan2025robustly} with robust satisfaction of nonlinear constraints, both via optimized linearization error overbounds. For nonlinear dynamics $\hat f$ of the form \eqref{eq:dynamics}, we can obtain an LTV approximation
using the Jacobian linearizations of $\hat f$ around $(\textbf{z}, \textbf{v})$: $A_k:= \bm{\nabla}_x \hat{f}\left(z_k, v_k\right),~\text{and}~B_k:=\bm{\nabla}_u \hat{f}\left(z_k, v_k\right)$.
To robustly satisfy nonlinear constraints of the form \eqref{eq:constraints}, we can enforce the tightened constraints \eqref{eq:nonlinear_constraint}, obtained by linearizing $g_i$, for all $i \in \{1,\ldots, n_c\}$, around the nominal trajectory \eqref{eq:nominal_ltv}: 
\begin{equation}\label{eq:nonlinear_constraint}\textstyle
\sum_{j=0}^{k-1}\big\Vert\big[\nabla_xg_i(z_k,v_k)\Px_{k,j}, \nabla_ug_i(z_k,v_k)\Pu_{k,j}\big]\big\Vert_{2} + g_i(z_k,v_k) + b_i +g_i^\textrm{lin}(z_k,v_k) \leq 0,
\end{equation}
where $g_i^\textrm{lin}(z_k,v_k)$ is a linearization error bound term (details are omitted for brevity; see \cite[Eq. (22)]{zhan2025robustly}). This ensures that the closed-loop dynamics satisfy \eqref{eq:constraints} for all $k \in \{1,\ldots,T\}$.

\paragraph{Problem Statement}

We assume that we are given an offline dataset $\mathcal{D} := \{(x_i, u_i, f(x_i, u_i))\}_{i=1}^{N}$, where each $(x_i, u_i)$ is drawn from a joint distribution $Y$. We partition $\mathcal{D}$ into a \textit{training dataset} $\mathcal{D}_\textrm{train} := \{(x_i, u_i, f(x_i, u_i))\}_{i=1}^{N_\textrm{train}}$, to train the learned dynamics $\hat f$ \eqref{eq:dynamics} and a model error predictor $\mathbf{\Sigma}: \mathcal{X} \times \mathcal{U} \rightarrow \mathbb{R}^{n_x \times n_x}$, described in Sec. \ref{sec:methods}, and \textit{calibration dataset} $\mathcal{D}_\textrm{calib} := \{(x_i, u_i, f(x_i, u_i)\}_{i=1}^{N_\textrm{calib}}$. Our goal is to 1) estimate calibrated error bounds on the learned dynamics $\hat f$ that hold with high probability and to 2) use these bounds within a robust MPC framework, with a prediction horizon of $T$ steps, to guarantee robust constraint satisfaction with high probability. Specifically, we solve: 

\noindent\textit{\underline{Problem 1: Learning and Calibration.}} Using $\mathcal{D}_\textrm{train}$, learn an approximate dynamics model $\hat f: \mathcal{X} \times \mathcal{U} \rightarrow \mathcal{X}$. Using $\hat f$ and $\calib$, provide a calibrated high-confidence bound $\mathcal{E}: \mathcal{X} \times \mathcal{U} \rightarrow 2^\mathcal{X}$ on the dynamics error $\varepsilon: \mathcal{X}\times \mathcal{U}\rightarrow \mathbb{R}^{n_x}$, such that $\textrm{Pr}[\bigcap_{k=1}^T \big(\varepsilon(x_k,u_k) \in \mathcal{E}(z_k,v_k)\big)] \ge 1-\tilde\alpha$, for nominal state $(z_k, v_k)$ and for all reachable state/controls $(x_k,u_k) \in \mathcal{R}_k:= (\mathcal{R}_k^x, \mathcal{R}_k^u)$, as defined in \eqref{eq:sls_tubes}. Here, $\alpha \in(0,1)$ is user-specified and $\tilde \alpha \in [\alpha, 1)$ absorbs the coverage gap in \eqref{eq:cp_guarantee}.


\looseness-1\noindent\textit{\underline{Problem 2: Planning.}} Using the learned model $\hat f$ and error bound $\mathcal{E}$ of Prob. 1, solve a robust MPC problem for $\hat f$ ensuring that the constraints \eqref{eq:constraints} are satisfied for the true closed-loop dynamics (i.e., under $f$ in \eqref{eq:dynamics}) with probability at least $1-\tilde \alpha$, jointly over $T$ prediction steps, for a single MPC solve, and with probability at least $1-\hat \alpha$ for $R$ MPC solves (i.e., $R-1$ replanning steps), where $\hat \alpha \in (\tilde \alpha, 1)$. 

\vspace{-10pt}
\section{Method}\label{sec:methods}
We outline our method, \methodnospace, which uses WCP for high-probability state-control-dependent model error bounding (Sec. \ref{sec:cp_err_bound}) and integrates it into an uncertainty-constrained SLS-based robust MPC optimization (Sec. \ref{sec:sls_plus_cp_bound}). Finally, we discuss practical implementation (Sec. \ref{sec:scp}). 

\vspace{-12pt}
\subsection{Model Training and Model Error Bounding via Conformal Prediction}\label{sec:cp_err_bound}
\looseness-1To tackle Prob. 1, we first train the dynamics $\hat f$ \eqref{eq:dynamics} on the training dataset $\mathcal{D}_\textrm{train}$ by minimizing a mean-squared-error loss $\textstyle\sum_{k=1}^{N_\textrm{train}} \Vert f(x_k, u_k) - \hat f(x_k, u_k)\Vert_2$. Then, for any nominal state $z \in \mathcal{X}$ and control input $v \in \mathcal{U}$, we seek to bound the error of the learned dynamics model $\Vert \hat{f}(z, v)-f(z,v)\Vert_2 $. To encode spatially-varying error, we train a heuristic uncertainty model $\covariance: \mathcal{X}\times \mathcal{U} \to \mathbb{R}^{n_x \times n_x}$, an NN, with a multivariate Gaussian negative-log-likelihood (MGNLL) loss \citep{dasgupta2007line}:
\begin{equation}\textstyle\small
\mathcal{L}(\covariance; f, \hat{f}, x, u) = \frac{1}{2} \left( (f(x,u) -\hat{f}(x,u))\T \covariance(x,u)^{-1} (f(x,u) -\hat{f}(x,u)) + \ln(\det(\covariance(x,u)))\right).
\end{equation}
\looseness-1MGNLL is selected to train $\mathbf{\Sigma}$ to output an uncalibrated covariance matrix which locally approximates the \textit{shape} and \textit{magnitude} of $\hat{f}$'s error dispersion at a given state-control pair. WCP's calibration procedure reconciles the inexact gaussian assumption (of MGNLL) made of the error dispersion by scaling the covariance matrix.   
To capture the model error at timestep $k$ for a nominal ($z_k, v_k)$, we adapt the non-conformity score of \cite{messoudi2022ellipsoidal}:
\begin{equation}\label{eq:non_conform}\small
s_{i,k} := s_k(x_i, u_i, f(x_i, u_i)) = \sqrt{\big(f(x_i, u_i) - \hat{f}(x_i, u_i)\big )\T\covariance(z_k,v_k)^{-1}\big(f(x_i, u_i) - \hat{f}(x_i, u_i)\big)}.
\end{equation}
Using \eqref{eq:non_conform}, we form the $(1-\alpha_k)$-confidence CP set for some nominal $(z_k, v_k)$, $\mathcal{C}(z_k, v_k; {q_{1-\alpha_k}})$, as
\begin{equation}\label{eq:ellip_prediction_set}\small
    \mathcal{C}(z_k,v_k; {q_{1-\alpha_k}}):= \{x \in \mathcal{X} \mid \sqrt{\big(x - \hat{f}(z_k,v_k)\big )\T\covariance(z_k,v_k)^{-1}\big(x - \hat{f}(z_k,v_k)\big)} \leq q_{1-\alpha_k}(z_k, v_k)\},
\end{equation}
defining $q_{1-\alpha_k}(z_k,v_k):=Q_{1-\alpha_k}(\hat{S}_k)$, where $\hat{S}_k := \sum_{i=1}^n \tilde w^k_i\delta_{s_{i,k}}+\tilde w^k_{\textrm{test}}\delta_{+\infty}$ is formed using a set of normalized weights $\tilde W^k := \{\tilde{w}_{1}^{k}, \dots, \tilde{w}_{N_\textrm{calib}}^{k}, \tilde w_\textrm{test}^k\} \subseteq [0,1]^{N_\textrm{calib}+1}$ tailored to the nominal state/control pair $(z_k,v_k)$. 
Intuitively, $\mathcal{C}(z_k,v_k; {q_{1-\alpha_k}})$ is an ellipsoid centered at the predicted next state $\hat{f}(z_k, v_k)$, with a radius given by the quantile of the weighted non-conformity scores from \eqref{eq:non_conform}. 

To construct the normalized weights $\tilde W^k$ tailored for a nominal state/control pair $(z_k,v_k)$, we first define a set of unnormalized weights $W^k = \{{w}_{1}^{k}, \dots, {w}_{N_\textrm{calib}}^{k}, w_\textrm{test}^k=1\} \subseteq [0,1]^{N_\textrm{calib}+1} $, where
\begin{equation}\label{eq:rho_weights}
    w^k_i = \rho^{||(z_k,v_k) - (x_i, u_i)||_2},
\end{equation}
\looseness-1where $\rho \in (0, 1]$, for all $(x_i, u_i)\in \calib$. The form of \eqref{eq:rho_weights} encodes the assumption that model error is state/control-dependent and changes smoothly, such that nearby pairs have similar errors. We normalize the weights $w^k$ as $\tilde w^k_i = \frac{w^k_i}{1 + \sum_{j=1}^{n}w^k_j}$ to be compatible with the WCP framework (Sec. \ref{sec:preliminaries}). 

\looseness-1Denote $(s_{1,k}, s_{2,k}, \ldots s_{i-1,k}, s_{k,k}, s_{i+1,k}, \ldots, s_{N_\textrm{calib},k}, s_{i,k}) \sim S^{i,k}$ and $(s_{1,k}, \ldots, s_{N_\textrm{calib},k}, s_{k,k}) \sim S^{k,k}$ (i.e., $S^{i,k}$ exchanges $s_{k,k}$ with $s_{i,k}$), where $s_{k,k} = s_k(z_k,v_k,f(z_k,v_k))$ is generally unknown as $f(z_k,v_k)$ may not be in $\calib$ for some candidate $(z_k, v_k)$. Via the WCP framework, \eqref{eq:cp_guarantee} ensures

\vspace{-8pt}
\begin{equation}\label{eq:cp_guarantee_f}\textstyle
    \textrm{Pr}[{f}(z_k, v_k) \in \mathcal{C}(z_{k}, v_{k};{q_{1-\alpha_k}})] \ge 1 - \alpha_k -\sum_{i=1}^{n}\tilde{w}_i^k\,\tvd(S^{i,k}, S^{k,k}).
\end{equation}
\vspace{-12pt}

\noindent Starting from \eqref{eq:cp_guarantee_f}, we see that the dynamics error is in an origin centered ellipsoid, $\varepsilon(z_k,v_k) = \big(f(z_k,v_k) - \hat f(z_k, v_k) \big) \in \mathcal{C}(z_k, v_k; q_{1-\alpha_k}) \oplus \{-\hat f(z_k, v_k)\}$, implying that it satisfies $\Pr[\varepsilon(z_k,v_k) \in \mathcal{E}(z_k,v_k)] \geq 1 - \alpha_k -\sum_{i=1}^{n}\tilde{w}_i^k\,\tvd(S^{i,k}, S^{k,k}) $, where we set $\mathcal{E}(z_k, v_k) := \mathcal{C}(z_k, v_k; q_{1-\alpha_k}) \oplus \{-\hat f(z_k, v_k)\}$. In essence, by conformalizing the heuristic uncertainty model $\mathbf{\Sigma}$, we obtain a calibrated spatially-dependent model error bound $\mathcal{E}(z_k, v_k)$ that holds with high probability. 
Equivalently, using the Cholesky factor $L(z_k,v_k)$ of $\mathbf{\Sigma}(z_k,v_k)$, i.e., $\mathbf{\Sigma}(z_k,v_k) = L(z_k,v_k)L(z_k,v_k)^\top$, \eqref{eq:ellip_prediction_set}-\eqref{eq:cp_guarantee_f}, and selecting a miscoverage rate $\alpha_k \in (0,1)$ tailored to timestep $k$, we can state that
\begin{equation}\label{eq:ball_form}\textstyle
\textrm{Pr}[\varepsilon(z_k,v_k) \in \underbrace{V(z_k,v_k)\mathcal{B}^{n_x}}_{:=\mathcal{E}(z_k,v_k)}] \ge 1 - \alpha_k -\sum_{i=1}^{N_\textrm{calib}}\tilde{w}_i^k\,\tvd(S^{i,k}, S^{k,k}),
\end{equation}\vspace{-22pt}

\begin{equation}\label{eq:ball_form_vzk}\textstyle
\text{where}\quad V(z_k,v_k) := {q_{1-\alpha_k}(z_k,v_k)L(z_k,v_k)}.
\end{equation}
\looseness-1\noindent While \eqref{eq:ball_form}-\eqref{eq:ball_form_vzk} bounds model error at $(z_k, v_k)$, the system may be perturbed during execution within the reachable tube $\mathcal{R}_k$. Thus, at every timestep $k$, we must quantify the coverage of $\mathcal{E}(z_k, v_k)$ from any $(x_k,u_k) \in \mathcal{R}_k$. 
This can be achieved by inflating the coverage gap with an additional term, yielding $\Pr[\varepsilon(x_k,u_k) \in \mathcal{E}(z_k,v_k)] \ge 1-\alpha_k- 2\sum_{i=1}^{N_\textrm{calib}} \tilde{w}_i^k \tvd(S_{i,k}, S_{k,k}) - \gamma(\mathcal{R}_k)$, where $\gamma(\mathcal{R}_k)$ is defined in Thm. \ref{thm:tube_gap}. Here, $S_{i,k}$ and ${S}_{k,k}$ are distributions such that $s_{i,k} \sim S_{i,k}$, $s_{k,k} \sim {S}_{k,k}$. 
Specifically, Thm. \ref{thm:total_coverage} derives the probability that $\varepsilon(x_k,u_k)\in\mathcal{E}(z_k,v_k)$ for all $(x_k,u_k) \in \mathcal{R}_k$, for all $k \in \{1,\ldots,T\}$, providing our solution for Prob. 1 (proofs in App. \ref{app:proofs}):

\vspace{-4pt}
\begin{theorem}\label{thm:total_coverage}
    \looseness-1Given a length-$T$ trajectory of the true dynamics $f$ \eqref{eq:dynamics} $\tau_f:= \{(x_k, u_k)\}_{k=1}^{T}$ such that $(x_k, u_k) \in \mathcal{X} \times \mathcal{U}$ for all $k \in \{1, \ldots, T\}$, desired miscoverage $\{\alpha_k\}_{k=1}^T$, total miscoverage $\sigma_k = \alpha_k + 2\sum_{i=1}^{N_\textrm{calib}} \tilde{w}_i^k \tvd(S_{i,k}, S_{k,k})+ \gamma({\mathcal{R}_k})$, and independence between  $\calib$ and $(x_k,u_k)$, we have:

    \vspace{-10pt}
    \begin{equation}\textstyle
        \Pr[\bigcap_{k=1}^{T}~\big(\varepsilon(x_k,u_k) \in \mathcal{E}(z_k,v_k)\big)] \ge 1- \sum_{k=1}^{T}\sigma_k := 1-\tilde\alpha.
    \end{equation}
\vspace{-20pt}
    
\end{theorem}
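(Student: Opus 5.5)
The plan has two parts: a per-timestep coverage bound, then a union bound over the horizon. For each fixed $k$, I will show $\Pr[\varepsilon(x_k,u_k)\in\mathcal{E}(z_k,v_k)]\ge 1-\sigma_k$. Then I apply Boole's inequality to the complements:
\[
\Pr\Big[\textstyle\bigcup_{k=1}^T \{\varepsilon(x_k,u_k)\notin\mathcal{E}(z_k,v_k)\}\Big]\le \textstyle\sum_{k=1}^T \sigma_k=\tilde\alpha .
\]
Taking complements gives the claim. No independence across timesteps is needed, which matters because the $(x_k,u_k)$ along $\tau_f$ are strongly correlated.

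For the per-step bound I start from the WCP guarantee \eqref{eq:cp_guarantee_f}, equivalently \eqref{eq:ball_form}. This gives coverage at least $1-\alpha_k-\sum_i \tilde w_i^k \tvd(S^{i,k},S^{k,k})$ for the score $s_{k,k}$ at the nominal pair $(z_k,v_k)$. The assumed independence of $\calib$ and $(x_k,u_k)$ is what lets the weights and the quantile $q_{1-\alpha_k}(z_k,v_k)$ be treated as fixed relative to the test point.

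Next I compare the test score at the actual pair, $s_k(x_k,u_k,f(x_k,u_k))$, with the nominal score $s_{k,k}$. Both are evaluated with the same $\covariance(z_k,v_k)^{-1}$ from \eqref{eq:non_conform}. The event $\varepsilon(x_k,u_k)\in V(z_k,v_k)\mathcal{B}^{n_x}$ is exactly $\{s_k(x_k,u_k,f(x_k,u_k))\le q_{1-\alpha_k}\}$. Write $\tilde S_{k}$ for the law of this actual score. Then
\[
\bigl|\Pr[\tilde s\le q]-\Pr[s_{k,k}\le q]\bigr|\le \tvd(\tilde S_k,S_{k,k}).
\]
I bound this distance by $\gamma(\mathcal{R}_k)$ using Thm.~\ref{thm:tube_gap}, which is presumably stated exactly to control the discrepancy for any $(x_k,u_k)\in\mathcal{R}_k$.

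The factor $2$ on the weighted TV sum should come from converting the joint exchange distances $\tvd(S^{i,k},S^{k,k})$ into the marginal ones $\tvd(S_{i,k},S_{k,k})$ appearing in $\sigma_k$. Swapping coordinate $i$ with the test coordinate changes the joint law in two coordinates. Under independence across the calibration and test draws, a product-measure (coupling/triangle-inequality) bound gives $\tvd(S^{i,k},S^{k,k})\le 2\,\tvd(S_{i,k},S_{k,k})$. Combining this with the previous step yields $1-\sigma_k$.

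The main obstacle is justifying this factor-2 step, since it requires the scores to be independent rather than merely relating joint to marginal laws. I would first try the product structure that follows from i.i.d.\ calibration data drawn from $Y$, together with independence from the test point. A second subtlety is that $(x_k,u_k)$ lies in $\mathcal{R}_k$ only on the event that earlier errors were covered. I would handle this by defining $\gamma(\mathcal{R}_k)$ uniformly over the tube, so that the per-step bound holds on that event and the union bound absorbs the earlier failures.
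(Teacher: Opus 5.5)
Your proposal is correct and follows the paper's proof. The paper likewise takes the per-step bound $\Pr[\varepsilon(x_k,u_k)\in\mathcal{E}(z_k,v_k)]\ge 1-\sigma_k$ from Thm.~\ref{thm:tube_gap} and union-bounds over $k=1,\ldots,T$ in the style of Lindemann et al.\ and Chee et al., with the factor $2$ coming, as you guessed, from Barber et al.'s lemma for independent scores.

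The differences are minor. Thm.~\ref{thm:tube_gap} already is the full per-step statement, so you can cite it rather than re-derive it, and the bound $\tvd(\tilde S_k,S_{k,k})\le\hat\epsilon M(\mathcal{R}_k)$ comes from \eqref{eq:lipschitz} with $\hat\epsilon$, not from that theorem. The paper's proof puts the nominal-to-actual shift inside each term of the weighted TV sum via a triangle inequality, whereas you shift outside the WCP bound, which gives the slightly smaller term $\hat\epsilon M(\mathcal{R}_k)\le\gamma(\mathcal{R}_k)$. Your remark that $(x_k,u_k)\in\mathcal{R}_k$ only holds on the event that earlier steps were covered addresses a point the paper's sketch leaves implicit.
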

Overall, at every timestep, we predict an ellipsoid $\mathcal{E}(z_k,v_k)$ containing the one-step error $\varepsilon(x_k,u_k)$ with high probability. Next, we will use this data-driven, spatially-dependent bound $\mathcal{E}(z_k, v_k)$ to inform SLS-based robust MPC, shrinking tubes where data is dense and expanding them if OOD.

\vspace{-10pt}
\subsection{Informing SLS-based Planning with CP Model Error Bounds}\label{sec:sls_plus_cp_bound}

\looseness-1Using our conformalized model error bounds, we construct an uncertain nonlinear system $z_{k+1} = \hat{f}(z_k,v_k) + V(z_k,v_k)\xi_k$,
where $\xi_k \in \mathcal{B}^{n_x}$ and $ V(z_k,v_k)\xi_k$ represent the unknown, bounded model error at timestep $k$. 
Let $\Px$ and $\Pu$ collect all $\Px_{k,j}$ and $\Pu_{k,j}$. Then we adapt \cite{leeman2025robust_TAC} to write a nonlinear program (NLP) in \eqref{eq:nonlinear_sls} to optimize for $\Z = \{z_1,\dots,z_{T}\}$, $\V = \{v_1,\dots,v_{T-1}\}$, $\Px$, and $\Pu$, given an initial state $\bar{x}_0 \in \mathcal{X}$: 
\begin{subequations}\small
\label{eq:nonlinear_sls}
\begin{align}
\min_{\substack{\mathbf{\Phi}_{\mathbf{x}},\mathbf{\Phi}_{\mathrm{u}},\mathbf{z}, \mathbf{v}}}\quad &  J(\mathbf{z},\mathbf{v}) +  \tilde H_0(\Px, \Pu) + J_{\mathcal{X}_\textrm{f}}(\textbf{z}, \textbf{v}),\label{eq:nonlinear_objective}\\[-6pt]
\text { s.t. }\quad\quad &  {z}_{k+1}= \hat{f}(z_k,v_k),~ {z}_1 = \bar x_0,\quad\quad \forall k = 1,\ldots, T-1,\label{eq:nonlinear_nominal}\\
&\Px_{k+1,j} =A_k  \Px_{k,j} + B_k \Pu_{k,j},\ \quad \forall j = 1,\ldots, T-1,\quad \forall k = j+1,\ldots,T-1,\label{eq:SLP_nonlinear}\\
&\Px_{j+1,j} = V(z_j, v_j),\quad \forall j = 1,\ldots, T-1,\label{eq:SLP_ic_nonlinear}\\
& \textstyle\sum_{j=1}^{k}\big\Vert[\nabla_xg_{i}(z_k,v_k)\Px_{k,j}, \nabla_ug_{i}(z_k,v_k)\Pu_{k,j}]\big\Vert_{2} + g_{i}(z_k,v_k) + b_{i} + g_i^\textrm{lin}(z_k,v_k)  \leq 0,\label{eq:SLC_nonlinear}\\
&  \quad\quad \forall i =1, \ldots, n_c,\quad \forall k= 1,\ldots,T. \nonumber
\end{align}
\end{subequations}
\noindent Here, \eqref{eq:nonlinear_objective} penalizes a cost $J(\mathbf{z},\mathbf{v})$ on the nominal trajectory,  $J_{\mathcal{X}_\textrm{f}}(\textbf{z}, \textbf{v})$ penalizes reaching the terminal set $\mathcal{X}_\textrm{f}$, and $\tilde H_0(\Px, \Pu)$ penalizes the size of the reachable tubes (see \eqref{eq:regulizer}, \eqref{eq:lqr}, \eqref{eq:lqr_terminal}). 
Since $\hat f$ is more accurate in the training domain, we modify $J(\mathbf{z},\mathbf{v})$ to include an active error reduction cost $J_\mathrm{active}(\mathbf{z},\mathbf{v})$ (see App.~\ref{app:uncert_reduc}) minimizing the distance of $(\mathbf{z},\mathbf{v})$ to in-distribution calibration points. Constraint \eqref{eq:nonlinear_nominal} ensures nominal dynamic feasibility, while \eqref{eq:SLP_nonlinear} and \eqref{eq:SLP_ic_nonlinear} enforce SLS conditions \eqref{eq:sls_affine} for the LTV approximation obtained by linearizing $\hat f$ around $(\mathbf{z},\mathbf{v})$, with model error bound $V(z_j,v_j)$ from \eqref{eq:ball_form_vzk}. Constraint \eqref{eq:SLC_nonlinear} enforces robust state-control satisfaction using bounds on the closed-loop response derived by informing SLS with CP. Since SLS guarantees an overapproximation of the true reachable tube, (1) the closed-loop trajectory executing $(\mathbf{z}, \mathbf{v}, \Px, \Pu)$ over the $T$-step horizon satisfies \eqref{eq:constraints} with high probability, and (2) safety is maintained with high, though diminishing, probability over iterative replanning steps. The following result addresses Prob. 2:

\vspace{-6pt}
\begin{theorem}\label{cor:rollout_prob}
    Define the forecasted nominal trajectory {\small $\tau_{\hat f}^{(q)} := \{(z_k^{(q)}, v_k^{(q)})\}_{k=1}^{T}$} as the $T$-timestep future nominal trajectory predicted by the learned dynamics {\small$\hat f$} at the $q$th MPC solve (i.e., the $(q-1)$st replanning step) in \eqref{eq:nonlinear_nominal}. \textbf{(a)} Then the true closed-loop dynamics \eqref{eq:dynamics} under the controller defined by \eqref{eq:nonlinear_sls} and \eqref{eq:ctrl_sls} satisfies {\small $\Pr[\bigcap_{k=1}^T \big((x_k^{(q)}, u_k^{(q)})\in \mathcal{F}\big)] \geq 1 - \sum_{k=1}^{T}\sigma_k^{(q)}$}, where {\small $\{(x_k^{(q)}, u_k^{(q)})\}_{k=1}^T$} is the realized state-control trajectory if executed for the full $T$ timesteps, and \textbf{(b)} defining $\sigma_1^{(q)}$ as the miscoverage rate for the first timestep at MPC solve iteration $q$, we have that by executing \eqref{eq:nonlinear_sls} over $R$ MPC solves (i.e., $R-1$ re-planning steps), {\small $\Pr[\bigcap_{q=1}^R\big((x_1^{(q)}, u_1^{(q)})\in \mathcal{F}\big)] \geq 1 - \sum_{q=1}^{R}\sigma^{(q)}_1 := 1-\hat \alpha$}. 
\end{theorem}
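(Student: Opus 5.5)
The plan is to reduce both parts to Thm.~\ref{thm:total_coverage} plus a deterministic SLS containment argument, and then to apply a union bound.

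\textbf{Part (a).} Fix the $q$th MPC solve and write $(z_k,v_k,\Px,\Pu)$ for its solution. Define the good event $G^{(q)} := \bigcap_{k=1}^T \big(\varepsilon(x_k^{(q)},u_k^{(q)}) \in \mathcal{E}(z_k^{(q)},v_k^{(q)})\big)$. By Thm.~\ref{thm:total_coverage}, applied to the realized closed-loop trajectory, $\Pr[G^{(q)}] \ge 1-\sum_{k=1}^T\sigma_k^{(q)}$. It therefore suffices to prove the deterministic implication $G^{(q)} \Rightarrow (x_k^{(q)},u_k^{(q)})\in\mathcal{F}$ for all $k$.

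On $G^{(q)}$, each realized error can be written as $\varepsilon(x_k,u_k) = V(z_k,v_k)\xi_k$ for some $\xi_k\in\mathcal{B}^{n_x}$, using the ball form \eqref{eq:ball_form}--\eqref{eq:ball_form_vzk}. The true closed loop is then exactly a trajectory of the uncertain system $x_{k+1}=\hat f(x_k,u_k)+V(z_k,v_k)\xi_k$ driven by admissible disturbances. The controller \eqref{eq:ctrl_sls} uses the disturbance estimates, and these are recoverable because $V$ is invertible via the Cholesky factor.

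Next I invoke the nonlinear SLS guarantee of \cite{leeman2025robust_TAC,zhan2025robustly}. Constraints \eqref{eq:SLP_nonlinear}--\eqref{eq:SLP_ic_nonlinear} make $(\Px,\Pu)$ the closed-loop response of the linearization with $\Px_{j+1,j}=V(z_j,v_j)$. The term $g_i^{\textrm{lin}}$ overbounds the linearization error of both the dynamics and the constraints. Consequently, \eqref{eq:SLC_nonlinear} implies $g_i(x_k,u_k)+b_i\le 0$ for every disturbance sequence in the unit ball. This last implication is a worst-case dual-norm bound:
\[
\max_{\|\xi_j\|\le1}\textstyle\sum_j [\nabla_x g_i\Px_{k,j},\nabla_u g_i\Pu_{k,j}]\,\xi_j=\sum_j\|[\cdot]\|_2 .
\]
Hence $G^{(q)}$ implies joint constraint satisfaction, which gives (a).

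\textbf{Part (b).} Under receding horizon, only the first step of each solve is executed. Let $G_1^{(q)} := \{\varepsilon(x_1^{(q)},u_1^{(q)})\in\mathcal{E}(z_1^{(q)},v_1^{(q)})\}$. At the first step we have $x_1^{(q)}=z_1^{(q)}=\bar x_0^{(q)}$, so the tube at $k=1$ is a singleton and $\gamma(\mathcal{R}_1)$ is trivial. The single-step version of Thm.~\ref{thm:total_coverage} (i.e., $T=1$) gives $\Pr[(G_1^{(q)})^c]\le\sigma_1^{(q)}$.

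The same deterministic argument as in (a), restricted to $k=1$, shows that the executed pair lies in $\mathcal{F}$ on $G_1^{(q)}$. In fact the $k=1$ constraint of \eqref{eq:SLC_nonlinear} is essentially nominal, so this is immediate. For the step to the next state I would also record that, on $G_1^{(q)}$, the next state lies in the predicted tube. This keeps recursive feasibility out of the probability statement, since the theorem only asserts a bound on constraint satisfaction.

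A union bound over $q=1,\ldots,R$ then gives $\Pr[\bigcup_q (G_1^{(q)})^c]\le\sum_q\sigma_1^{(q)}$. This yields $1-\hat\alpha$ with no independence assumption across solves.

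The main obstacle is the subtle dependence: at solve $q$ the nominal $(z^{(q)},v^{(q)})$, and hence the weights and quantile, depend on previously realized states. Thm.~\ref{thm:total_coverage} assumes independence between $\calib$ and $(x_k,u_k)$. I would handle this by conditioning on the history up to solve $q$. Given the history, the calibration set is still independent of the test point, since online data is not added, or if it is, the resulting shift is absorbed in the TV gap term of $\sigma_1^{(q)}$. I would then bound each conditional miscoverage by $\sigma_1^{(q)}$ and take expectations before the union bound.
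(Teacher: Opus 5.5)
Your proposal is correct at the paper's level of rigor and follows the same route as the paper: (a) is Theorem~\ref{thm:total_coverage} combined with the deterministic SLS containment enforced by \eqref{eq:SLC_nonlinear}, and (b) bounds the first-step coverage events (with $\mathcal{R}_1^{(q)}=\{(z_1^{(q)},v_1^{(q)})\}$, so $\gamma$ vanishes), takes a union bound over the $R$ solves, and then reuses the argument of (a). The one step to drop is your closing conditioning argument, because it does not restore independence: the realized history, and even the first nominal plan, depend on $\calib$ through the quantiles in $V(z_k,v_k)$, and Alg.~\ref{alg:robust_mpc} does add online data; instead, simply inherit the independence hypothesis of Theorem~\ref{thm:total_coverage} at each solve, as the paper implicitly does.
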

\vspace{-4pt}
\noindent Overall, by solving \eqref{eq:nonlinear_sls}, we reduce model error by 1) minimizing reachable tube volume and 2) penalizing distance to data. Coupling planning with CP-informed reachability ensures high-probability safety for the true dynamics. Finally, refining the CP error set $\mathcal{C}$ with online data (Alg.~\ref{alg:robust_mpc}) reduces the coverage gap, improving empirical model error coverage.
\vspace{-8pt}

\subsection{Practical Implementation via SCP}\label{sec:scp}
\vspace{-4pt}

\looseness-1To efficiently solve the NLP \eqref{eq:nonlinear_sls}, we propose \method (Alg. \ref{alg:robust_mpc}). \method uses sequential convex programming (SCP) \citep{malyuta2022convex,messerer2021survey}. Each SCP iteration solves a second-order cone program (SOCP) (Alg. \ref{alg:robust_mpc}, line \ref{alg:lin:single}), obtained by linearizing $\hat f$ around the current nominal trajectory $(\textbf{z}, \textbf{v})$ and convexifying the constraint tightenings in \eqref{eq:SLC_nonlinear}, yielding an update $(\Z\l^*, \V\l^*)$ to the nominal trajectory. 
To solve this SOCP efficiently, we use the solver in \cite{leeman2024fast}, which iterates between solving a QP for $(\mathbf{z}, \mathbf{v})$ and Riccati recursions for $(\Px, \Pu)$ (line \ref{alg:lin:fastsls}). For efficiency, we apply a real-time iteration scheme \citep{Gros02012020}, limiting SCP to one iteration per step \citep{leeman2025guaranteed}, and omit linearization error \citep{zhan2025robustly}, which is small in practice (Sec.~\ref{sec:results}). At each step, we execute the first control input $v_1$ (line \ref{alg:lin:execute}), shift the previous solution for initialization (line \ref{alg:lin:shift}), and re-solve via SCP. Moreover, we improve OOD robustness by updating the error set $\mathcal{V}$ with online observations (line \ref{alg:lin:data}). Augmenting the error set updates the quantile $q_{1-\alpha_k}(z_k, v_k)$, which in turn alters constraint \eqref{eq:SLP_ic_nonlinear} via \eqref{eq:ball_form_vzk}, reducing the coverage gap and increasing the likelihood that the closed-loop dynamics remain within the computed reachable tubes.

\vspace{-4pt}
\begin{algorithm}[H]\small
\caption{\method}\label{alg:robust_mpc}
\begin{algorithmic}[1]
\Procedure{CP-SLS-MPC}{$\calib, \text{horizon }T, \mathcal{F}, \bar x_0, \text{ terminal set }\mathcal{X}_\f$}
    \State $t \gets 1$ and $(\Z, \V) \gets $ Nominal Solution \Comment{Initialize the problem with start state $\bar{x}_0$}
    \State $\mathcal{V} \gets \{(x_i, u_i, f(x_i,u_i) - \hat f(x_i,u_i))\}_{i=1}^{N_\textrm{calib}},\ \forall (x_i, u_i, f(x_i, u_i))\in\calib$ \Comment{Initialize error set $\mathcal{V}$}
    \State \textbf{while} {$x_t \notin \mathcal{X}_\f$} \textbf{do} \Comment{$\mathcal{X}_\f$ is included as a terminal cost \eqref{eq:lqr}, App \ref{app:lqr_app}}
        \State $\quad \bar x_0 \gets x_t$; initialize $(\Z, \V)$ with shifted previous solution \Comment{Initialize with the current state}
        \label{alg:lin:shift}\State $\quad$Linearize \eqref{eq:nonlinear_sls} around $(\Z, \V)$; compute $V(z_k, v_k)$ via \eqref{eq:ball_form_vzk}, $\mathcal{V}$; form SOCP
        \label{alg:lin:single}\State $\quad$$(\Z\l^*, \V\l^*)$, $\Px, \Pu \gets$ SOCP solution \Comment{Single iteration of \cite{leeman2024fast}}.
        \label{alg:lin:fastsls}\State $\quad$Update $(\Z, \V) \leftarrow (\Z, \V) + (\Z\l^*, \V\l^*)$.

        \State $\quad$$u_t \gets v_1$; $x_{t+1} \gets f(x_t,u_t) $ \Comment{Closed loop control; state update}
        \label{alg:lin:execute}\State $\quad \mathcal{V} \gets \mathcal{V} \cup \{(x_t,u_t, x_{t+1} - \hat{f}(x_t,u_t))\}$; $t \gets t + 1$ \Comment{Online data update}\label{alg:lin:data}
\EndProcedure
\end{algorithmic}
\end{algorithm}
\vspace{-26pt}

\section{Theoretical Analysis: Bounding the Coverage Gap}\label{sec:theory}

\looseness-1To bound the coverage gap, we assume non-conformity scores from the calibration set $\calib$ are independent of the score at the nominal point $(z_k,v_k)$, though they may come from different distributions. We also assume a Lipschitz-type bound on the distribution drift between ${S_{i,k}}$ and $S_{k,k}$, i.e.,

\vspace{-10pt}
\begin{equation}\label{eq:lipschitz}
\tvd( S_{i,k},S_{k,k}) \leq  \epsilon ||[z_k\T,v_k\T] - [x_i\T,u_i\T]||_{2},    
\end{equation}
\vspace{-13pt}

\noindent for $\epsilon \in \mathbb{R}, \epsilon > 0$, where $s_i \sim S_i$ \eqref{eq:non_conform} and $\tilde{S}$ is the non-conformity score distribution of $(z,v)$. 
\eqref{eq:lipschitz} is a practical assumption, stating that the dynamics error distribution shifts gradually over the state-control space. 
The next theorem shows a overbound on the calibration error $\tvd(S_{i,k},S_{k,k})$.
\vspace{-8pt}
\begin{theorem}\label{thm:cov_gap_thm} The \textit{Coverage Gap}$\,=\sum_{i=1}^{N_\textrm{calib}}\tilde{w}_i\,\tvd(S^{i,k}, S^{k,k})$ of our conformal predictor satisfies

\vspace{-10pt}
\begin{equation}\textstyle
    \text{Coverage Gap}\le \underbrace{\textstyle\sum_{i=1}^{N_\textrm{calib}} \left( \frac{\rho^{d_i}}{1 + \sum_{j=1}^{N_\textrm{calib}} \rho^{d_j}} \right) \cdot 2\epsilon \cdot d_i}_{\text{Tight Bound}} \le \underbrace{\textstyle 2 \epsilon \left[  \frac{d_1 }{1 - \rho^{d_\text{min}}} + \frac{d_\text{max}\rho^{d_\text{min}}}{(1-\rho^{d_\text{min}})^2}\right]}_{\text{Interpretable Bound}}, 
\label{eq:thm3_bound}\end{equation}
where we sort the distance to points in the calibration set such that $d_i = ||[z_k\T,v_k\T] - [x_i\T,u_i\T]||_{2}$, $d_1 \le d_2 \le \dots \le d_n$, $d_\text{min} = \min_{1 \leq i \leq n-1}\{d_{i+1} - d_i\}$, and $d_\text{max} = \max_{1 \leq i \leq n-1} \{ d_{i+1} - d_i\}$.
\end{theorem}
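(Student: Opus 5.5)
The proof has two stages: bound each total-variation term in the coverage gap by $2\epsilon d_i$ to get the tight bound, then control the resulting weighted sum by comparing the sorted distances against a geometric series.

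\emph{Tight bound.} The task is to bound $\tvd(S^{i,k},S^{k,k})$ for each $i$. By the independence assumption stated at the start of Sec.~\ref{sec:theory}, both joint laws are product measures. They differ only in that $S^{i,k}$ places the laws of $s_{i,k}$ and $s_{k,k}$ in swapped positions. For product measures, total variation is subadditive over the coordinates, which gives
\[
\tvd(S^{i,k},S^{k,k}) \le \tvd(S_{i,k},S_{k,k}) + \tvd(S_{k,k},S_{i,k}) = 2\,\tvd(S_{i,k},S_{k,k}).
\]
This subadditivity can be justified by a coupling argument or by telescoping through the hybrid product measure. Applying the Lipschitz assumption \eqref{eq:lipschitz} then gives $\tvd(S^{i,k},S^{k,k}) \le 2\epsilon d_i$. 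Substituting the explicit normalized weights $\tilde w_i = \rho^{d_i}/(1+\sum_j \rho^{d_j})$ from \eqref{eq:rho_weights} yields the first inequality in \eqref{eq:thm3_bound}.

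\emph{Interpretable bound.} Drop the normalizing denominator, using $1+\sum_j\rho^{d_j}\ge 1$, so it remains to bound $\sum_i \rho^{d_i} d_i$. Sorting the distances gives $d_i \ge d_1 + (i-1)d_\text{min}$, hence $\rho^{d_i} \le \rho^{d_1}\rho^{(i-1)d_\text{min}} \le r^{i-1}$ with $r := \rho^{d_\text{min}}$. Likewise $d_i \le d_1 + (i-1)d_\text{max}$. Combining these,
\[
\sum_i \rho^{d_i} d_i \le d_1\sum_{i\ge1} r^{i-1} + d_\text{max}\sum_{i\ge1}(i-1)r^{i-1} = \frac{d_1}{1-r} + \frac{d_\text{max}\,r}{(1-r)^2}.
\]
Here the finite sums are extended to infinite series, which is valid because all terms are nonnegative, and the second series uses $\sum_{m\ge0} m r^m = r/(1-r)^2$. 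Multiplying by $2\epsilon$ gives the interpretable bound.

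\emph{Main obstacle.} The delicate step is the first one: relating the TV distance between the swapped joint laws to the marginal TV distance and checking that the factor is exactly $2$. The bound needs a product structure, which must come from the independence assumption. If the calibration scores are not mutually independent and only independent of the test score, the joint law of the remaining $N_\textrm{calib}-1$ coordinates is common to $S^{i,k}$ and $S^{k,k}$. In that case I would use the coupling argument: couple the two swapped coordinates maximally while leaving the rest identical. The second part is routine, but it requires $r<1$, i.e.\ $\rho<1$ and $d_\text{min}>0$ (distinct distances); if $\rho=1$ or $d_\text{min}=0$ the interpretable bound is vacuous.
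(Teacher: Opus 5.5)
Your first stage is the paper's argument. The paper obtains $\tvd(S^{i,k},S^{k,k})\le 2\,\tvd(S_{i,k},S_{k,k})$ by citing Theorem~2 and Lemma~1 of Barber et al.\ for independent data, which is exactly the product-measure subadditivity you prove directly. It then substitutes the Lipschitz-type assumption and the weights, as you do.

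Your second stage is a legitimate shortcut. The paper keeps the normalizer and lower-bounds it by $1+\rho^{d_1}\sum_{j}\rho^{(j-1)d_{\max}}$. It evaluates both finite geometric sums in closed form and drops the $1$, so that $\rho^{d_1}$ cancels. Only then does it discard the factor $(1-\rho^{d_{\max}})/(1-\rho^{N_{\mathrm{calib}}d_{\max}})\le 1$ and the nonpositive finite-$N_{\mathrm{calib}}$ terms. You instead discard the normalizer and $\rho^{d_1}\le 1$ immediately and pass to infinite series. This reaches the same interpretable bound with much less algebra. What the paper's bookkeeping buys is the retained factor $(1-\rho^{d_{\max}})/(1-\rho^{N_{\mathrm{calib}}d_{\max}})$. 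The paper uses it for its corollary, which sharpens the bound by a factor $1-\rho^{d_{\max}}$ as $N_{\mathrm{calib}}\to\infty$; your route cannot give that refinement.

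One correction to your ``main obstacle'' paragraph. Your fallback fails when the calibration scores are only jointly independent of the test score but dependent among themselves. In $S^{k,k}$, coordinate $i$ holds $s_{i,k}$, which is correlated with the other calibration scores $s_{-i}$. In $S^{i,k}$, that coordinate holds the independent test score. So equality of the law of the remaining coordinates does not let you couple only the swapped pair. Conditioning on $s_{-i}$ only yields the bound $2\,\mathbb{E}\big[\tvd(\mathcal{L}(s_{i,k}\mid s_{-i}),S_{k,k})\big]$. By convexity of total variation, this is at least, not at most, $2\,\tvd(S_{i,k},S_{k,k})$.

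Concretely, let $s_{1,k}=s_{2,k}=X$, and let $s_{k,k}$ be an independent copy of $X$ with a continuous law. Then $\tvd(S_{1,k},S_{k,k})=0$, while $\tvd(S^{1,k},S^{k,k})=1$. Indeed, the first two coordinates coincide with probability $1$ under $S^{k,k}$ and with probability $0$ under $S^{1,k}$. So the factor-$2$ step genuinely requires mutual independence of all $N_{\mathrm{calib}}+1$ scores. That is what the paper's appeal to Barber et al.'s lemma for independent data actually uses, and it is stronger than the literal wording of the stated assumption. Under that reading your main argument is complete.
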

\vspace{-2pt}

\noindent Thm. \ref{thm:cov_gap_thm} (interpretable bound) shows that the gap depends on state-control calibration point density. Greater point density decreases $d_\text{min}$, increasing the gap; reducing $\rho$ mitigates this, but doing so can cause the $(1-\alpha)$-quantile to diverge. The term $\frac{d_1}{1 - \rho^{d_\text{min}}}$ shows that being closer to the training data reduces the gap, as $d_1$ is the minimal distance between $\calib$ and $(z_k,v_k)$. Further discussion of the intuition and validation of the empirical bounds is in App. \ref{app:theoretical_expansion}. Online augmentation of $\mathcal{V}$ may violate independence (due to time-correlation), but for large $N_\textrm{calib}$, Thm. \ref{thm:cov_gap_thm} suffices. As SLS requires model error to hold for all $(x,u) \in \mathcal{R}_k$, we also provide a coverage bound valid over the reachable tube:
\begin{theorem}\label{thm:tube_gap}
Under the same assumption of Theorem \ref{thm:cov_gap_thm}, we have that for all $(x,u) \in \mathcal{R}_k$, the tube around the nominal point $(z_k,v_k)$, 
    $\Pr[\varepsilon(x,u) \in \mathcal{E}(z_k,v_k)] \geq 1-\alpha_k- 2\sum_{i=1}^{n} \tilde{w}_i d_{TV}(S_{i,k}, S_{k,k})- \gamma({\mathcal{R}_k})$,
for $ \gamma({\mathcal{R}_k}) := 2\hat{\epsilon}M(\mathcal{R}_k) $, where $M(\mathcal{R}_k)$ is the maximum axis length of tube $\mathcal{R}_k$, and $\hat \epsilon \le \epsilon$ is chosen such that \eqref{eq:lipschitz} holds for all $(x, u) \in \mathcal{R}_k$.
\end{theorem}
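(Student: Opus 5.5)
The plan is to treat the tube point $(x,u)\in\mathcal{R}_k$ as a new ``test point'' and compare its score distribution to that of the nominal point $(z_k,v_k)$, at which \eqref{eq:cp_guarantee_f} (equivalently \eqref{eq:ball_form}) already gives coverage. Let $s_{x,k}:=s_k(x,u,f(x,u))$ be the score \eqref{eq:non_conform} evaluated at the tube point. The shape matrix $\covariance(z_k,v_k)$ and the quantile $q_{1-\alpha_k}(z_k,v_k)$ are held fixed at the nominal point. Then the event $\varepsilon(x,u)\in\mathcal{E}(z_k,v_k)=V(z_k,v_k)\mathcal{B}^{n_x}$ is exactly $\{s_{x,k}\le q_{1-\alpha_k}(z_k,v_k)\}$. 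So it suffices to lower bound $\Pr[s_{x,k}\le q_{1-\alpha_k}]$ when the test score is $s_{x,k}\sim S_{x,k}$ rather than $s_{k,k}\sim S_{k,k}$.

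\textbf{Step 1 (bound at the nominal point).} First I would rewrite the WCP gap in terms of the marginal drifts. By the independence assumption of Theorem \ref{thm:cov_gap_thm}, $S^{i,k}$ and $S^{k,k}$ are product measures that differ only in two coordinates: one has $S_{i,k}$ where the other has $S_{k,k}$, and vice versa. Subadditivity of $\tvd$ over product measures then gives $\tvd(S^{i,k},S^{k,k})\le 2\tvd(S_{i,k},S_{k,k})$. Hence
\[
\Pr[s_{k,k}\le q_{1-\alpha_k}]\ge 1-\alpha_k-2\textstyle\sum_i\tilde w_i\tvd(S_{i,k},S_{k,k}).
\]
This is the factor $2$ in the statement.

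\textbf{Step 2 (transfer to the tube point).} The calibration scores are independent of the test score and have the same law in both scenarios. So the joint laws of $(s_{1,k},\dots,s_{N_\textrm{calib},k},s_{x,k})$ and $(s_{1,k},\dots,s_{N_\textrm{calib},k},s_{k,k})$ differ only in the last factor. Their total variation distance is therefore at most $\tvd(S_{x,k},S_{k,k})$. The coverage event is a measurable function of this joint vector, because $q_{1-\alpha_k}$ depends only on the calibration scores and the fixed weights. The definition of total variation then gives
\[
\Pr[s_{x,k}\le q_{1-\alpha_k}]\ge \Pr[s_{k,k}\le q_{1-\alpha_k}]-\tvd(S_{x,k},S_{k,k}).
\]

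\textbf{Step 3 (bound the drift by the tube size).} Apply \eqref{eq:lipschitz} with constant $\hat\epsilon$ on $\mathcal{R}_k$ to get $\tvd(S_{x,k},S_{k,k})\le\hat\epsilon\,\|(x,u)-(z_k,v_k)\|_2$. By \eqref{eq:sls_tubes}, $\mathcal{R}_k$ is a Minkowski sum of ellipsoids around $(z_k,v_k)$. Its distance from the center is at most $M(\mathcal{R}_k)$, interpreting the maximum axis length as the full diameter-type extent, or as half of it with a factor to spare. This gives the bound $2\hat\epsilon M(\mathcal{R}_k)=\gamma(\mathcal{R}_k)$, where the factor $2$ absorbs either the center-to-boundary convention or a symmetric two-coordinate swap as in Step 1. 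Combining Steps 1--3 yields the claim.

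\textbf{Main obstacle.} The delicate point is Step 2 combined with the bookkeeping of constants. One must check that the quantile computed with weights tailored to $(z_k,v_k)$ can legitimately be reused for a different test point. This works only because the weights and $\covariance(z_k,v_k)$ do not depend on the test score. It must also be justified that $\hat\epsilon\le\epsilon$ yields a factor of exactly $2M(\mathcal{R}_k)$. If the simple one-coordinate TV argument gives only $\hat\epsilon M$, the stated bound follows a fortiori.
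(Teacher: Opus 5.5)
Your argument is correct, but it is organized differently from the paper's.

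The paper does not pass through coverage at the nominal point. It applies the Barber et al.\ gap bound directly with the tube point as the test point, so each swap term becomes $2\,\tvd(S_{i,k},\tilde S_{k,k})$, where $\tilde S_{k,k}$ is the tube-point score law. It then splits each such term by the triangle inequality through $S_{k,k}$. The leftover pieces sum to $2\,\tvd(S_{k,k},\tilde S_{k,k})\sum_i\tilde w_i\le 2\hat\epsilon M(\mathcal R_k)$. So the factor $2$ in $\gamma(\mathcal R_k)$ comes from the tube-point drift entering both swapped coordinates, not from a radius-versus-diameter convention.

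You instead treat the nominal-point guarantee as a black box and transfer it with a single total-variation comparison of two joint laws that differ only in the test coordinate. This buys two things. It is more modular: any valid coverage statement at $(z_k,v_k)$ transfers the same way. It is also sharper: the penalty is $\hat\epsilon\|(x,u)-(z_k,v_k)\|_2\le\gamma(\mathcal R_k)/2$, so the stated bound follows a fortiori. The paper's route, in exchange, stays inside the WCP swap template and presents the result literally as a coverage-gap bound at the tube point.

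You can therefore drop the hedging in Step 3 about how the factor $2$ is ``absorbed'' and simply state your sharper bound. Both proofs rely on the same unstated hypothesis: the calibration scores are independent of the tube-point score, not merely of the nominal one. Both also read $M(\mathcal R_k)$ informally as a bound on the distance from $(z_k,v_k)$ to any point of $\mathcal R_k$.
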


Using Thm. \ref{thm:tube_gap}, we find that the coverage guarantees remain close to the desired coverage level $\alpha$, assuming a gradual spatial-drift of the error distribution. Furthermore, the optimization problem \eqref{eq:nonlinear_sls} seeks to minimize the tube volume in the cost function, thereby minimizing $\gamma({\mathcal{R}}_k)$. 

\vspace{-12pt}
\section{Results}\label{sec:results}
\vspace{-3pt}

We evaluate our method on robust planning with learned 4D Dubins' (see App.~\ref{app:car_dyn}) and 12D quadcopter dynamics (see App. \ref{app:quad_dyn}). Comparisons (see App. \ref{app:probs} for details) include na\"ive, uncalibrated MPC on the learned dynamics $\hat f$ without robust constraint tightenings, denoted as vanilla MPC (V-MPC), and a fixed hyper-ball variant of our method (CP-Ball), where $\covariance(z_k,v_k)$ in \eqref{eq:non_conform} is replaced with $I_{n_x}$, highlighting the benefit of state-dependent CP-Ellipsoid (our method) uncertainty sets. CP-Ball is an adaptation of \citet{DBLP:conf/l4dc/CheeSHP24}, using the L2-norm of errors as the non-conformity score. 
Direct use of \citet{DBLP:conf/l4dc/CheeSHP24} was not feasible, as it only collects online data points, requiring horizon lengths longer than 150 points for successful deployment. 
Training details, model architectures, and cost functions are in App.~\ref{app:training}. Our implementation uses Acados \citep{verschueren2020acadosmodularopensourceframework} and L4CasADi \citep{salzmann2024l4casadi} and is run on an M4 Pro MacBook (12 cores, 24 GB RAM). We report average MPC step time, prediction error $|f(x,u) - \hat f(x,u)|$, and minimum obstacle distance. All trials use horizon $T=15$ with $\alpha_k = 0.1/15$.

\begin{figure}[htbp]
    \centering
    \vspace{-8pt}
    \includegraphics[width=\textwidth]{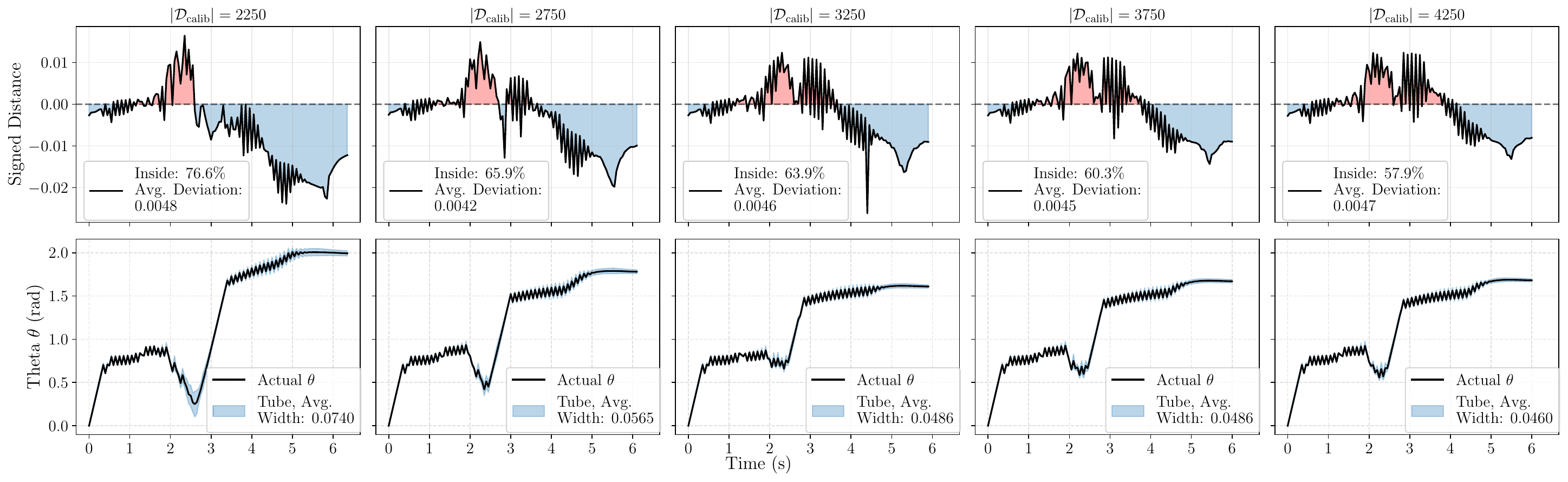}
    \vspace{-25pt}
    \caption{\textbf{OOD Car.} (Top) Minimum distance between prediction error and the ellipsoid edge is plotted for varying $\calib$ with a fixed start and goal; negative values (blue) indicate points inside the ellipsoid, i.e., valid coverage. (Bottom) Corresponding one-step $\theta$ tubes are shown. Smaller $\calib$ keeps prediction errors inside the ellipsoid more often and results in larger tubes.
    }\vspace{-15pt}
    \label{fig:OOD_tubes}
\end{figure}
\paragraph{Dubins': In Domain (ID).} First, we validate that our approach maintains performance compared to baselines in ID scenarios without a major loss of computational efficiency. We used the car dynamics in \eqref{eq:car_dynamics} and selected start and goal states training distribution (equivalently the feasible set for the ID results). We define $\mathcal{F} = \{(x,u) \mid p_x \in [0,5], p_y \in [-5,5], \Vert [p_x, p_y]^\top -[2.5, 0]\Vert_2 \ge 1\}$, 

\begin{wrapfigure}{r}{0.35\textwidth}
    \centering
    \vspace{-5pt}
    \includegraphics[width=\linewidth]{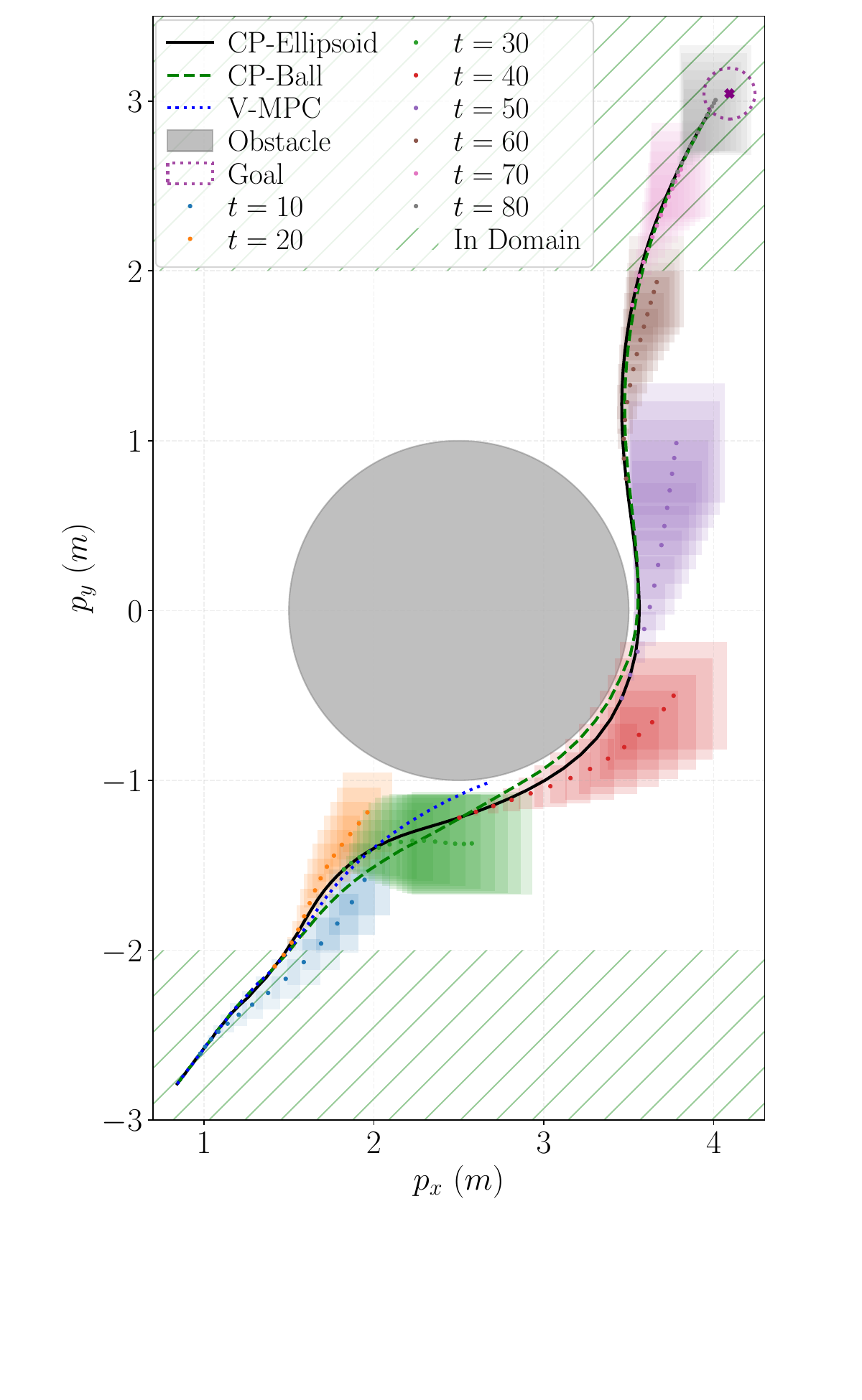}
    \vspace{-0.85in}
    \caption{\textbf{Friction Car.} We plot all approaches and forecasted MPC steps \& tubes for CP-Ellipsoid.}
    \label{fig:friction_car}
    \vspace{-10pt}
\end{wrapfigure}
\vspace{-5pt}

\noindent with bounds on $p_x$ and $p_y$ and a 1-m radius obstacle centered at $(2.5, 0)$. Across 10 runs, all three approaches (V-MPC, CP-Ball, and CP-Ellipsoid) safely reached the goal while avoiding the obstacle; however, V-MPC does not perform constraint tightening and lacks safety guarantees. In Table \ref{tab:combined_results_modified}, CP-Ball and CP-Ellipsoid have larger obstacle proximity--a byproduct of the constraint tightening (Table \ref{tab:combined_results_modified}).
Finally, the one-step prediction error for $\hat f$ remains inside the ball and ellipsoid over $99.33\% = 100(1-\frac{0.1}{15})\%$ during execution--validating the coverage guarantees (Fig. \ref{fig:ellip_in_domain} in App. \ref{app:additional_results}).
In the training domain, we expect V-MPC to have similar success rates as our method since $\hat f$ is accurate; this changes in OOD settings. We find that our approach avoids the obstacle ID, without a major slowdown from constraint tightening computations.

\vspace{-5pt}
\paragraph{Dubins': OOD.} To evaluate OOD performance, we train a dynamics and uncertainty model where  $p_y \in [-12, -6] \cup [6,12]$ and modify the dynamics to steer the car’s angle toward the obstacle for $p_y \in [-6,6]$ (see App. \ref{app:car_dyn}), while placing the feasible set of the rollout (same as the ID experiment) OOD. In this setting, we vary the calibration set size from 2,250 to 4,250 points (Fig. \ref{fig:OOD_tubes}) for CP-Ellipsoid, ensuring the start and goal are outside the training domain. 
\begin{wrapfigure}{r}{0.38\textwidth}
\vspace{-16pt}
    \centering
    \includegraphics[width=\linewidth]{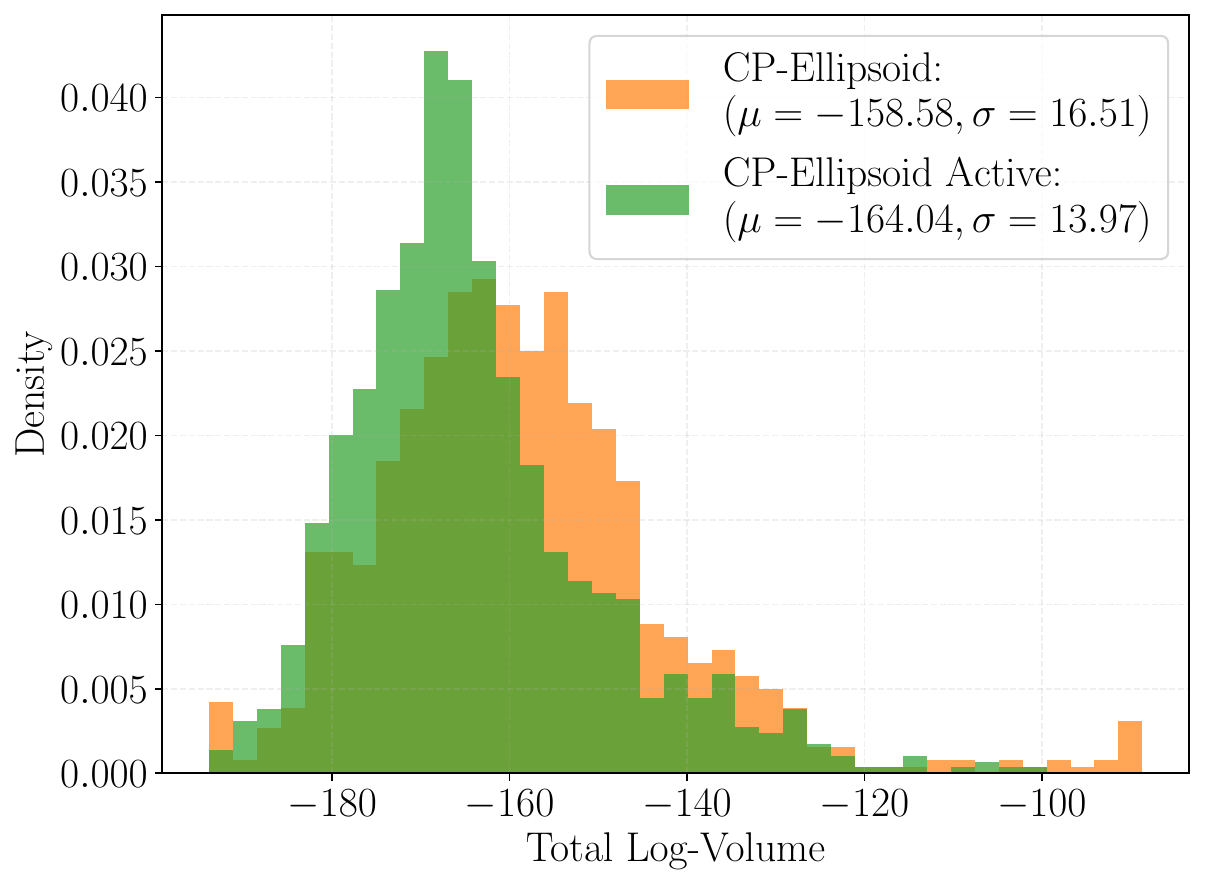}
    \vspace{-25pt}
    \caption{\textbf{Active Uncertainty.} We plot the histogram of summed log-volumes for each forecasted MPC plan. With active uncertainty reduction, the distribution shifts left, indicating less uncertainty at runtime.
    }
\vspace{-10pt}
    \label{fig:friction_car}
\end{wrapfigure}
As shown in Fig.~\ref{fig:OOD_tubes}, the one-step prediction errors initially lie outside the predicted ellipsoid but gradually return within it as online calibration points are collected. Moreover, with fewer initial calibration points, the online adaptation occurs more rapidly, resulting in a higher coverage of errors in the ellipsoid. The plot also shows that the one-step tubes in $\theta$ are wider for smaller $\calib$, indicating that data scarcity increases conservativeness. The goal is reliably reached, suggesting that the SLS tubes, when informed by CP, enable robust OOD performance.
\vspace{-5pt}
\paragraph{Dubins': Disjoint Training Domains (Friction Car).} 
To evaluate OOD obstacle avoidance under limited online data, we construct disjoint training regions (similar to the OOD setup) where  $p_y \in [-5, -2] \cup [2,5]$. The start and goal positions are randomly sampled from different regions, ensuring that all feasible trajectories must traverse OOD areas to reach the goal. In the OOD region, we apply the same steering modification as in the previous experiment, while adding $p_y$ dependent acceleration dynamics (see App.~\ref{app:car_dyn} for details).
Across 10 runs, V-MPC failed twice, the CP-Ball failed once, and CP-Ellipsoid was consistently successful. These differences arise from reduced obstacle clearance (Table~\ref{tab:combined_results_modified}) observed in the CP-Ball and V-MPC cases. Fig.~\ref{fig:friction_car} shows that both CP-Ball and CP-Ellipsoid reach the goal, but CP-Ellipsoid maintains a greater distance from the obstacle, as evident from its forecasted trajectories and uncertainty tubes. Fig. \ref{fig:friction_car} also illustrates that the state-dependent ellipsoids expand along the direction of motion, as increased forecasted $y$-motion leads to higher $y$-direction uncertainty. A result from an extra run is in App.~\ref{app:additional_results}.

\paragraph{Dubins': Active Uncertainty Reduction.}
Finally, to evaluate active uncertainty reduction, we trained $\hat{f}$ and $\covariance$ without sampling data in a circular region centered at $(p_x, p_y) = (2.5, 0.0)$ with a radius of 1 meter, and introduced friction and attractive steering dynamics within this unseen region. Thus the feasible set contains an OOD region, where rollouts are permitted to traverse. Despite this, the active uncertainty reduction method achieves lower prediction error by planning a longer path that remains in the training domain (Table~\ref{tab:active_uncert} App. \ref{app:additional_results}). Moreover, Fig.~\ref{fig:friction_car} shows the log-volume of the uncertainty tubes, showing that the active uncertainty cost (\eqref{eq:active_cost} in App.~\ref{app:uncert_reduc}) yields smaller tubes (Fig. \ref{fig:friction_car}, green) compared to the baseline without this cost (Fig. \ref{fig:friction_car}, orange).
\vspace{-4pt}
\begin{table}[h]
    \centering
  { 
  \footnotesize  
  \centering
  \caption{\looseness-1Mean and standard deviation of computation times and prediction errors and the mean minimum obstacle distance for vanilla-MPC (V-MPC), the CP-Ball (Ball), and CP-Ellipsoid (Ellipsoid). }\vspace{-10pt}
  \label{tab:combined_results_modified}
  \setlength{\tabcolsep}{2pt} 
  \begin{tabular}{@{}lccccccccc@{}}
    \toprule
    & \multicolumn{3}{c}{\textbf{Computation Time (ms)}} 
    & \multicolumn{3}{c}{\textbf{Prediction Error ($L_2$) ($\times 10^{-2}$)}} 
    & \multicolumn{3}{c}{\textbf{Min. Obstacle Dist. (m)}} \\
    \cmidrule(lr){2-4} \cmidrule(lr){5-7} \cmidrule(lr){8-10}
    \textbf{Model} 
    & \textbf{V-MPC} & \textbf{Ball} & \textbf{Ellipsoid} 
    & \textbf{V-MPC} & \textbf{Ball} & \textbf{Ellipsoid} 
    & \textbf{V-MPC} & \textbf{Ball} & \textbf{Ellipsoid} \\
    \midrule
    
    
    Car 
    & 53.3 $\pm$ 2.1 & 126.6 $\pm$ 8.4 & 126.7 $\pm$ 8.3 
    & 0.28 $\pm$ 0.54 & 0.26 $\pm$ 0.14 & 0.26 $\pm$ 0.14 
    & 1.152 & 1.202 & 1.207 \\
    
    Friction Car 
    & 53.7 $\pm$ 2.6 & 131.1 $\pm$ 1.4 & 129.7 $\pm$ 9.0 
    & 1.94 $\pm$ 2.87 & 1.92 $\pm$ 2.80 & 1.95 $\pm$ 2.69 
    & 1.315 & 1.376 & 1.464 \\
    
    Quadcopter 
    & 63.1 $\pm$ 4.6 & nan $\pm$ nan & 222 $\pm$ 20.9 
    & 3.89 $\pm$ 5.49 & nan $\pm$ nan & 3.30 $\pm$ 4.17 
    & 0.804 & nan & 0.809 \\
    \bottomrule
  \end{tabular}\vspace{-5pt}
  } 
\end{table}

\paragraph{12D Quadcopter}
\begin{SCfigure}
\vspace{-10pt}
    \centering
    \includegraphics[width=0.65\textwidth]{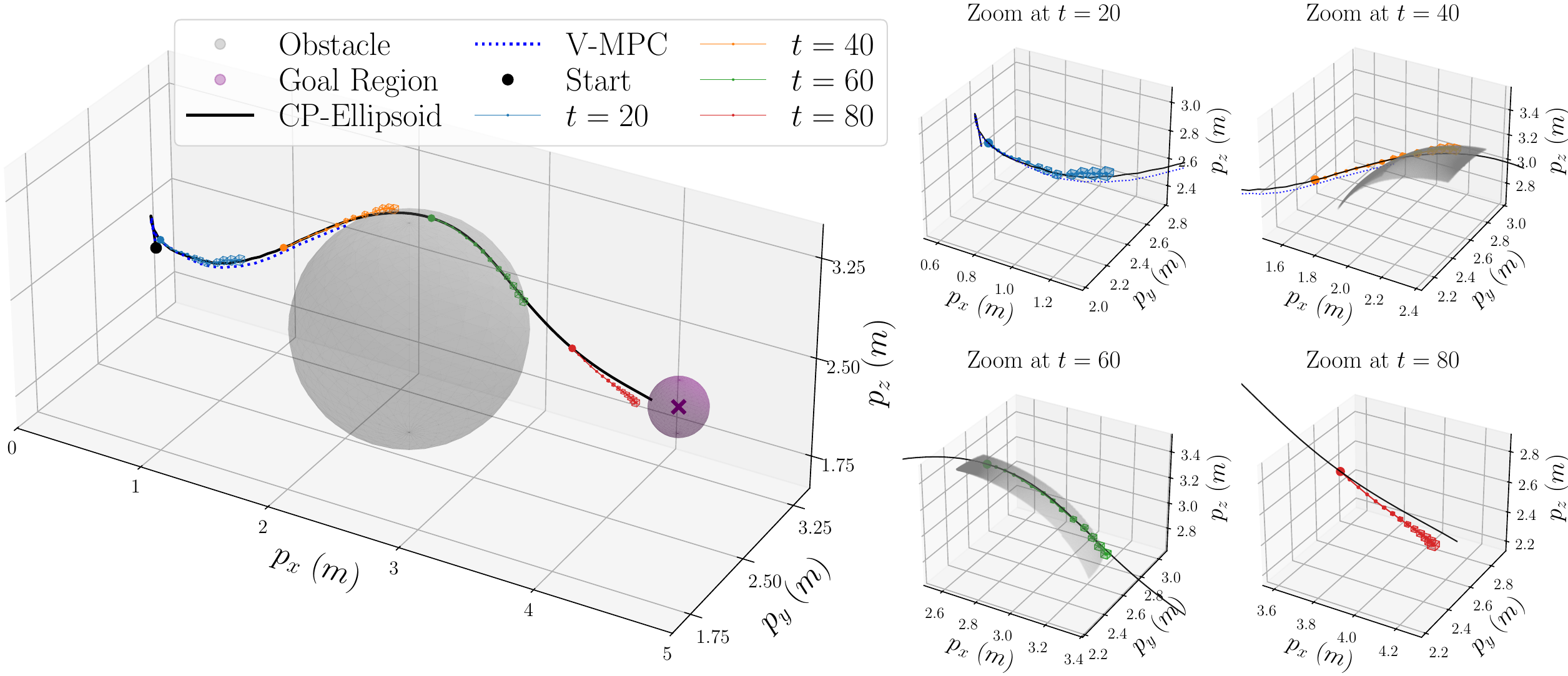}
    \vspace{-7pt}
    \caption{\textbf{Quadcopter.} A rollout of the quadcopter with V-MPC and CP-Ellipsoid. CP-Ellipsoid maintains sufficient proximity to avoid crashing into the obstacle, while V-MPC crashes.}
    \label{fig:quad_trajectory}
\end{SCfigure}
\vspace{-10pt}
To evaluate the scalability of our method to higher-dimensional systems, we train a 12D quadcopter dynamics model. Using this model, we plan trajectories such that the spatial coordinates $(p_x, p_y, p_z)$ remain outside a sphere centered at $(2.5, 2.5, 2.5)$ with a radius of 0.8 meters -- leaving a 0.2-meter-thick shell around the sphere that is out of distribution and characterized by increased downward acceleration. To improve the performance of the uncertainty model, we modify its weighting by blending the covariance matrix with the identity matrix to reduce conservativeness, i.e., using $((1-\tau) I + \tau \covariance(z,v))$ instead of $\covariance(z,v)$ in \eqref{eq:ellip_prediction_set}. With this approach, the CP-Ellipsoid method successfully planned 8 out of 10 trajectories (one failure was due to numerical instability, which did not result in collision), whereas V-MPC approach succeeded in only 4 out of 10 runs. While this demonstrates improved robustness over V-MPC, the failed case suggests faster tube expansion may be necessary to ensure consistent collision avoidance. Fig.~\ref{fig:quad_trajectory} shows an example quadcopter trajectory, demonstrating that our CP-informed tubes maintain safe obstacle proximity.

\vspace{-11pt}
\section{Conclusion}
\vspace{-4pt}

In this work, we developed an efficient framework for robust out-of-distribution MPC with learned dynamics, using WCP and SLS. We provided theoretical guarantees ensuring that the rollout satisfy constraints with high probability, and introduced an uncertainty reduction cost function to encourage the system to remain within the training domain when possible. Through experiments on variants of the 4D Dubins car and 12D quadcopter, we demonstrated that the \methodnospace{} framework achieves robust performance even when using learned dynamics that generalize poorly OOD.
\acks{We thank Devesh Nath for insightful discussions and feedback on this work.}

\bibliography{l4dc2026-sample}
\appendix

\newpage
\section{Proofs:}\label{app:proofs}
\subsection{Proof of Theorem \ref{thm:cov_gap_thm} and Corollary \ref{cor:cov_gap_infty}}
\begin{proof} \textbf{(Theorem \ref{thm:cov_gap_thm})}
From Theorem 2 and Lemma 1 in \citet{barber2023conformal}, the coverage gap for independent data is bounded by:
\begin{equation}\label{eq:init_barber}
\text{Coverage Gap} \le \sum_{i=1}^{N_\textrm{calib}} \tilde{w}_i \cdot d_{TV}(S^{i,k}, S^{k,k}) \le \sum_{i=1}^{N_\textrm{calib}} \tilde{w}_i \cdot 2 d_{TV}(S_{i,k}, S_{k,k})
\end{equation}
Given the normalized weights are $\tilde{w}_i = \frac{w_i}{1 + \sum_{j=1}^{N_\textrm{calib}} w_j}$, and $w_i = \rho^{d_i}$ for $d_i = ||[z_k\T,v_k\T] - [x_i\T,u_i\T]||_2$
We can substitute into Equation \ref{eq:init_barber}, and prove the \textit{Tight Bound} in Theorem \ref{thm:cov_gap_thm}:
\begin{equation}\label{eq:sub_barber}
\text{Coverage Gap} \le \sum_{i=1}^{N_\textrm{calib}} \left( \frac{\rho^{d_i}}{1 + \sum_{j=1}^{N_\textrm{calib}} \rho^{d_j}} \right) \cdot 2\epsilon \cdot d_i
\end{equation}
Without loss of generality, we can rearrange the sum to create a decaying series $\{\rho^{d_i}\}_{i=1}^{N_\textrm{calib}}$, where $d_i$ is increasing. While this series is not purely geometric, we can bound it by two geometric series. Let $d_\text{min} = \min\{1 \leq i \leq N_\textrm{calib}-1 | d_{i+1} - d_i\}$ and $d_\text{max} = \max\{1 \leq i \leq N_\textrm{calib}-1 | d_{i+1} - d_i\}$. Then, $\rho^{d_1}\rho^{d_\text{max}(i-1)} \leq \rho^{d_i}  \leq \rho^{d_1}\rho^{d_\text{min}(i-1)}$. Thus, we can bound Equation \ref{eq:sub_barber}, 
$$
\text{Coverage Gap} \le \sum_{i=1}^{N_\textrm{calib}} \left( \frac{\rho^{d_i}}{1 + \sum_{j=1}^{N_\textrm{calib}} \rho^{d_j}} \right) \cdot 2\epsilon \cdot d_i \leq 2\epsilon \left( \frac{\sum_{i=1}^{N_\textrm{calib}}\rho^{d_1}p^{d_\text{min}(i-1)}\cdot (d_1 + d_\text{max}(i-1))}{1 + \sum_{j=1}^{N_\textrm{calib}} \rho^{d_1}\rho^{d_\text{max}(j-1)}} \right) 
$$
Then observe the sum of the numerator, 
\begin{align}
N_\text{sum} &= \sum_{i=1}^{N_\textrm{calib}}\rho^{d_1}p^{d_\text{min}(i-1)}\cdot (d_1 + d_\text{max}(i-1)) \\
&=\rho^{d_1} \left[ d_1 \left( \frac{1 - \rho^{N_\textrm{calib} \cdot d_\text{min}}}{1 - \rho^{d_\text{min}}} \right) + d_\text{max} \left( \frac{\rho^{d_\text{min}} - N_\textrm{calib} \rho^{n \cdot d_\text{min}} + (N_\textrm{calib}-1)\rho^{(N_\textrm{calib}+1)d_\text{min}}}{(1-\rho^{d_\text{min}})^2} \right) \right]
\end{align}

Similarly, the sum the of the denominator,
\begin{align}
    D_\text{sum} &= 1 + \sum_{j=1}^{N_\textrm{calib}} \rho^{d_1}\rho^{d_\text{max}(j-1)}\\
    &= 1 + \rho^{d_1} \left( \frac{1 - \rho^{N_\textrm{calib} \cdot d_\text{max}}}{1 - \rho^{d_\text{max}}} \right)
\end{align}

Then, the coverage gap is bounded by,
\begin{small}
\begin{align}
    &\text{Coverage Gap} \le 2\epsilon\frac{N_\text{sum}}{D_\text{sum}} \\
    &\le 2 \epsilon \left[{d_1 \left( \frac{1 - \rho^{N_\textrm{calib} \cdot d_\text{min}}}{1 - \rho^{d_\text{min}}} \right) +
    d_\text{max} \left( \frac{\rho^{d_\text{min}} \overbrace{- N_\textrm{calib} \rho^{N_\textrm{calib} \cdot d_\text{min}} + (N_\textrm{calib}-1)\rho^{(N_\textrm{calib}+1)d_\text{min}}}^{\leq 0}}{(1-\rho^{d_\text{min}})^2} \right)}\right]\underbrace{\frac{1 - \rho^{d_\text{max}}}{1 - \rho^{N_\textrm{calib} \cdot d_\text{max}}}}_{\leq 1}\label{eq:n_sum_d_sum}\\
    &\le 2 \epsilon \left[  \frac{d_1 }{1 - \rho^{d_\text{min}}} + \frac{d_\text{max}\rho^{d_\text{min}}}{(1-\rho^{d_\text{min}})^2}\right].\label{eq:final_bound} 
\end{align}
Hence, we have proved the \textit{Interpretable Bound}.
\end{small}
\end{proof}

For large $N_\textrm{calib}$, (i.e., as $N_\textrm{calib}\to\infty$), we can achieve a tighter upper bound
\begin{corollary}\label{cor:cov_gap_infty} Under the same assumptions of Theorem \ref{thm:cov_gap_thm} we have that as $N_\textrm{calib}\to \infty$ the coverage gap is more tightly bounded by,
\begin{equation}\text{Coverage Gap}\le 2\epsilon
\left[ \frac{d_1 }{1 - \rho^{d_\text{min}}} + \frac{d_\text{max}\rho^{d_\text{min}}}{(1-\rho^{d_\text{min}})^2}\right](1-\rho^{d_\text{max}})
\end{equation}
\end{corollary}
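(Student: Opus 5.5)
The plan is to reuse the proof of Theorem~\ref{thm:cov_gap_thm} verbatim up to the bound \eqref{eq:n_sum_d_sum}, and then take $N_\textrm{calib}\to\infty$ in the two correction factors instead of discarding them. The chain from \eqref{eq:init_barber} through \eqref{eq:sub_barber} and the two geometric-series comparisons $\rho^{d_1}\rho^{d_\text{max}(i-1)}\le\rho^{d_i}\le\rho^{d_1}\rho^{d_\text{min}(i-1)}$ does not depend on $N_\textrm{calib}$. We therefore have, for every finite $N_\textrm{calib}$, Coverage Gap $\le 2\epsilon N_\text{sum}/D_\text{sum}$ with the closed forms for $N_\text{sum}$ and $D_\text{sum}$ computed there.

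First, I will pass to the limit in the numerator. Since $\rho\in(0,1)$ and $d_\text{min}>0$ (distinct distances are assumed), we have $\rho^{N_\textrm{calib}d_\text{min}}\to0$, $N_\textrm{calib}\rho^{N_\textrm{calib}d_\text{min}}\to0$ and $(N_\textrm{calib}-1)\rho^{(N_\textrm{calib}+1)d_\text{min}}\to0$, because a polynomial times a geometric decay vanishes. Hence
\[
N_\text{sum}\to\rho^{d_1}\Bigl[\tfrac{d_1}{1-\rho^{d_\text{min}}}+\tfrac{d_\text{max}\rho^{d_\text{min}}}{(1-\rho^{d_\text{min}})^2}\Bigr].
\]

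Second, I will handle the denominator. Rather than using the crude bound $1/D_\text{sum}\le\frac{1-\rho^{d_\text{max}}}{1-\rho^{N_\textrm{calib}d_\text{max}}}\cdot\rho^{-d_1}$ and throwing away the factor, I keep it:
\[
D_\text{sum}\ \ge\ \rho^{d_1}\,\frac{1-\rho^{N_\textrm{calib}d_\text{max}}}{1-\rho^{d_\text{max}}}\ \to\ \frac{\rho^{d_1}}{1-\rho^{d_\text{max}}}.
\]
In the ratio the factor $\rho^{d_1}$ cancels, and the limiting bound is exactly the bracket times $(1-\rho^{d_\text{max}})$, as claimed.

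The main obstacle is making ``as $N_\textrm{calib}\to\infty$'' meaningful. The quantities $d_1$, $d_\text{min}$ and $d_\text{max}$ themselves depend on the calibration set and may drift as points are added; in particular $d_\text{min}$ could shrink toward $0$. I would state the corollary under the interpretation that the spacing parameters are held fixed, or bounded uniformly with $d_\text{min}$ bounded away from $0$, while $N_\textrm{calib}$ grows. The limit of the upper bound is then a limsup bound on the gap. I would also note that the dropped ``$+1$'' in $D_\text{sum}$ only helps, so the inequality direction is preserved throughout.
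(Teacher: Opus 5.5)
Your proposal is correct and follows the paper's own argument. The paper lets $N_\textrm{calib}\to\infty$ in \eqref{eq:n_sum_d_sum}: it keeps the factor $\frac{1-\rho^{d_\text{max}}}{1-\rho^{N_\textrm{calib} d_\text{max}}}\to 1-\rho^{d_\text{max}}$ instead of bounding it by $1$, and the $N_\textrm{calib}$-dependent numerator terms vanish. Your caveats make explicit an interpretation the paper leaves implicit: $d_1$, $d_\text{min}>0$ and $d_\text{max}$ must be held fixed (or uniformly controlled) as $N_\textrm{calib}$ grows, and the result is a $\limsup$ statement, because at finite $N_\textrm{calib}$ that factor exceeds its limit.
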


\begin{proof}\textbf{(Corollary \ref{cor:cov_gap_infty})}
Taking Equation \ref{eq:n_sum_d_sum} as $n\to\infty$ we complete the proof. 
\end{proof}

\subsection{Proof of Theorem \ref{thm:tube_gap}}
\begin{proof}\textbf{(Theorem \ref{thm:tube_gap})}
    Using Equation \ref{eq:init_barber}, we have that 
    \begin{equation}\label{eq:init_barber_2}
    \text{Coverage Gap} \le \sum_{i=1}^{N_\textrm{calib}} \tilde{w}_i \cdot d_{TV}(\tilde{S}^{i,k}, \tilde{S}^{k,k}) \le \sum_{i=1}^{N_\textrm{calib}} \tilde{w}_i \cdot 2 d_{TV}(S_{i,k}, \tilde{S}_{k,k}),
    \end{equation}
    where $(s_{1,k}, s_{2,k}, s_{i-1,k}, \tilde{s}_{k,k}, s_{i+1,k}, \ldots, s_{N_\textrm{calib},k}) \sim \tilde{S}^{i,k}$, $(s_{1,k}, \ldots, s_{N_\textrm{calib},k}, \tilde{s}_{k,k}) \sim \tilde{S}^{k,k}$, $\tilde{s}_{k,k} = s_k(x_k,u_k,f(x_k,u_k))$ is unknown due to uncertainty in $f(x_k,u_k)$, and $\tilde{s}_{k,k} \sim \tilde{S}_{k,k}$ is the distribution of the non-conformity score corresponding to the observed state and control $(x_k,u_k)$. Then, using the triangle inequality we have that,
    \begin{equation}\label{eq:split_into_sums}
    \text{Coverage Gap} \le \sum_{i=1}^{N_\textrm{calib}} \tilde{w}_i \cdot d_{TV}(\tilde{S}^{i,k}, \tilde{S}^{k,k}) \le \underbrace{\sum_{i=1}^{N_\textrm{calib}} \tilde{w}_i \cdot 2 d_{TV}(S_{i,k}, S_{k,k})}_{(*), \text{Bounded in Eq. \ref{eq:final_bound}}} + \underbrace{\sum_{i=1}^{N_\textrm{calib}} \tilde{w}_i \cdot 2 d_{TV}(S_{k,k}, \tilde{S}_{k,k})}_{(**)}.
    \end{equation}
    Observing that the left sum $(*)$ is already bounded by Theorem \ref{thm:cov_gap_thm}, we proceed to bound the right sum $(**)$. Then, realizing the sum of normalized weights is $\le 1$, we obtain, 
    \begin{equation}\label{eq:near_by_cov}
        \sum_{i=1}^{N_\textrm{calib}} \tilde{w}_i \cdot 2 d_{TV}(S_{k,k}, \tilde{S}_{k,k}) = 2 d_{TV}(S_{k,k}, \tilde{S}_{k,k})\sum_{i=1}^{N_\textrm{calib}} \tilde{w}_i \le 2 d_{TV}(S_{k,k}, \tilde{S}_{k,k})\
    \end{equation}
     Then, using the Lipschitz-type bound, 
    \begin{equation}
        2 d_{TV}(S_{k,k}, \tilde{S}_{k,k}) \le 2\hat{\epsilon}||[x_k\T,u_k\T]-[z_k\T,v_k\T]||_2 \le 2\hat{\epsilon}M(\mathcal{R}_k) =: \gamma({\mathcal{R}_k}),
    \end{equation} 
    where, $(x_k,u_k) \in \mathcal{R}_k$, $M(\mathcal{R}_k)$ is the maximum axis length of the tube, and $\hat{\epsilon} \le \epsilon$ is the local Lipschitz constant in the tube. The remaining proof follows substituting the bound, $\gamma({\mathcal{R}}_k)$ for $(**)$ in \eqref{eq:split_into_sums}.
\end{proof}

\subsection{Proof Sketch of Theorem \ref{thm:total_coverage} and Theorem \ref{cor:rollout_prob}}
\begin{proofsketch} \textbf{(Theorem \ref{thm:total_coverage})}
Using Theorem \ref{thm:tube_gap} we get the probability of error coverage for each step in the rollout. Using the approach of Proposition 2 and Theorem 1 from \citet{DBLP:conf/l4dc/CheeSHP24} and \citet{DBLP:journals/ral/LindemannCSP23} respectively we complete the proof. 
\end{proofsketch}
\begin{proofsketch}\textbf{(Theorem \ref{cor:rollout_prob})}
\textbf{(a)} Since SLS provides guarantees conditioned on valid disturbance bounds, Theorem \ref{thm:total_coverage} directly implies the result in Corollary \ref{cor:rollout_prob}(a).

\textbf{(b)} Instead of applying the union bound approach across the forecasted MPC plan, we bound the intersection probability of the first-step errors remaining in the predicted ellipsoids for all $R$ planning steps. Then, using the same approach as Theorem \ref{thm:tube_gap} we get: 
\begin{equation}
        \Pr[\bigcap_{k=1}^{T}~\varepsilon(z_1^{(q)}, v_1^{(q)}) \in \mathcal{E}(z_1^{(q)}, v_1^{(q)})] \geq 1- \sum_{q=1}^{R}\sigma_1^q,
\end{equation}
Notably, we omit the tubes since the tubes at the first step of each replan, $\mathcal{R}_1^{q} = \{(z_1^{(q)}, v_1^{(q)})\}$ since the planner knows the initial state. Then, using the same approach as \textbf{(a)} we get the result in \textbf{(b)}.  
\end{proofsketch}

\section{Dynamics Models}
Below we discuss the dynamics models used in our experiments. We provide the continuous time dynamics which were used to train our models. We obtained the discrete model by using Euler Discretization. 
\subsection{Car Dynamics Models}\label{app:car_dyn}
\begin{equation}\label{eq:car_dynamics}
\begin{bmatrix}
    \dot{p}_x \\
    \dot{p}_y \\
    \dot{\theta} \\
    \dot{v}
\end{bmatrix}
=
\begin{bmatrix}
    v \cos(\theta) \\
    v \sin(\theta) \\
    0 \\
    0
\end{bmatrix}
+
\begin{bmatrix}
    0 & 0 \\
    0 & 0 \\
    1 & 0 \\
    0 & 1
\end{bmatrix}
\begin{bmatrix}
    \omega \\
    a
\end{bmatrix}
\end{equation}
Equation \ref{eq:car_dynamics} describes the control-affine dynamics models for Dubins' car, where $p_x$ and $p_y$ are the car's x and y positions. $\theta$ and $v$ are the direction and velocity of the car. Lastly, the control inputs directly steer ($\omega$) and accelerate ($a$) the car. The dynamics model in  \eqref{eq:car_dynamics} was used strictly for the in-distribution experiments. 

\paragraph{OOD Experiment Dynamics} For the out of distribution (OOD) Dubins' car experiment we modified the $\theta$ dynamics by adding attractive steering towards the obstacle centered at (2.5, 0) in the region without training data,
\begin{equation}\label{eq:theta_mod}
    \dot\theta = \omega_\text{attr}\1[|p_y| < 6] + \omega 
\end{equation}

\begin{subequations}\label{eq:w_attr}
\begin{align}
    d^2 &= (p_x - 2.5)^2 + p_y^2 \\
    \theta_{a} &= \text{atan2}(p_y,p_x) \\
    \Delta\theta_{norm} &= \text{atan2}(\sin(\theta_{a} - \theta), \cos(\theta_{a} - \theta)) \\
    \omega_{attr} &= \left( \frac{k_\text{attr}}{d^2} \right) \Delta\theta_{norm},
\end{align}
\end{subequations}
where the attractive steering dynamics are defined in Equation \ref{eq:w_attr}, and $\text{atan2}$ is the arctangent function accounting for the x-y quadrant, and $\1[\cdot]$ is the indicator function. For the OOD experiment we set $k_\text{attr}$, the strength of the attractive force to -0.5 (positive values repel). Notably, the strength of the force increases as the car gets closer to the obstacle. 

\paragraph{Active Uncertainty Experiment Dynamics} Similar to the OOD dynamics we include attractive steering (with $k_\text{attr} = -0.5$, we modify the the $v$ dynamics of the car such that, 
\begin{subequations}
    \begin{align}
    &\dot\theta = \tilde{\omega}_\text{attr}\1[d^2 < 1] + \omega \\
    &\dot v =  -(1 + \cos(\pi \sqrt{d^2}))\1[d^2 < 1]+ a \\
    &\tilde\omega_{attr} = \left( \frac{k_\text{attr}}{d^2 + 0.1} \right) \Delta\theta_{norm},
    \end{align}
\end{subequations}
where $d^2$ and $\Delta\theta_{norm}$ are defined in Equation \ref{eq:w_attr}. We modify the denominator of $\omega_\text{attr}$ to prevent a singularity at (2.5, 0.0)--this was not a problem in the OOD case since there was an obstacle present at (2.5, 0.0). The augmented velocity dynamics add a friction term to reduce the speed of the car in this region.

\paragraph{Friction Car Dynamics} Lastly, we modified Dubins' car such that the velocity dynamics were modified over the whole state-space, while including attractive steering (with $k_\text{attr}=-1.5$): 
\begin{subequations}
    \begin{align}
    &\dot\theta = {\omega}_\text{attr}\1[|p_y| < 2] + \omega \\
    &\dot v =  a-\text{sign}(v)\left(0.1\cos(\frac{2\pi}{5}p_y)) + 0.1 - a_\text{slip}\1[|p_y| < 2]\right)\\
    &a_\text{slip} = 0.7e^{-2p_y^2} 
    \end{align}
\end{subequations}
In this model, we include a slippage term to increase the velocity of the car for $-2 < p_y < 2$, while providing a changing friction term over the whole state-space. 

\subsection{12D Quadcopter Dynamics Model}\label{app:quad_dyn}
\begin{equation}
\begin{bmatrix}
    \dot{p}_x \\
    \dot{p}_y \\
    \dot{p}_z \\
    \dot{\psi} \\
    \dot{\theta} \\
    \dot{\phi} \\
    \ddot{p}_x \\
    \ddot{p}_y \\
    \ddot{p}_z \\
    \dot{p} \\
    \dot{q} \\
    \dot{r}
\end{bmatrix}
= f(x,u) :=
\begin{bmatrix}
    \dot{p}_x \\
    \dot{p}_y \\
    \dot{p}_z \\
    q \sin(\phi)/\cos(\theta) + r \cos(\phi)/\cos(\theta) \\
    q \cos(\phi) - r \sin(\phi) \\
    p + q \sin(\phi)\tan(\theta) + r \cos(\phi)\tan(\theta) \\
    \frac{u_1}{m} (\sin(\phi)\sin(\psi) + \cos(\phi)\cos(\psi)\sin(\theta)) \\
    \frac{u_1}{m} (\cos(\psi)\sin(\phi) - \cos(\phi)\sin(\psi)\sin(\theta)) \\
    g + \frac{u_1}{m} (\cos(\phi)\cos(\theta)) -a_\text{fall}\\
    \frac{I_y - I_z}{I_x} qr + \frac{u_2}{I_x} \\
    \frac{I_z - I_x}{I_y} pr + \frac{u_3}{I_y} \\
    \frac{I_x - I_y}{I_z} pq + \frac{u_4}{I_z}
\end{bmatrix}
\end{equation}
\begin{align}
    a_\text{fall} &= \left(0.1 + 0.1\cos(2\pi d^2 -\pi)\right)\1[{d^2 < 1}]\\
    d^2 &= (p_x-2.5)^2 + (p_y-2.5)^2 + (p_z-2.5)^2
\end{align}

For our experiments, we used the 12D Quadcopter dynamics model described in Equation \ref{app:quad_dyn} \citep{sabatino2015quadrotor}. $p_x, p_y, \text{ and } p_z$ describe the x, y, and z positions of the quadcopter while $\dot p_x, \dot p_y, \text{ and } \dot p_z$ describe the x, y, and z velocities. ${\psi}, {\theta}, \text{ and } {\phi}$ are the roll, pitch, and yaw. Similarly, $p, q, \text{ and } r$ are the roll rate, pitch rate, and yaw rates, respectively are directly controlled by inputs $u_2, u_3, \text{ and } u_4$. The control input, $u_1$, directly impacts the thrust--z-direction acceleration. Lastly, the constants $g$, $m$, $I_x$, $I_y$, and $I_z$ are the acceleration due to gravity, the mass of the quadcopter, and the principle moments of inertia. For our experiments, we let $g=-9.81 m/s^{2}$, $m = 1.0 kg$, $I_x = 0.5 kg\cdot m^2$, $I_y = 0.1 kg\cdot m^2$, and $I_z = 0.3 kg\cdot m^2$. Notably, the expression for $\ddot{p}_z$ includes $a_\text{fall}$ to introduce a falling term for OOD dynamics. The falling dynamics are not encountered while training the model as described in App. \ref{app:exp_data}. 

\section{Baseline Problem Definitions}\label{app:probs}

We define the baselines used in Sec. \ref{sec:results}. First, we define Vanilla MPC (V-MPC) \eqref{eq:vanilla_mpc}, which is a nominal na\"ive MPC planning under the learned dynamics without any constraint tightenings or model error calibration:
\begin{subequations}\small
\label{eq:vanilla_mpc}
\begin{align}
\min_{\substack{\mathbf{z}, \mathbf{v}}}\quad &  J(\mathbf{z},\mathbf{v}) + J_{\mathcal{X}_\textrm{f}}(\Z,\V)\\[-6pt]
\text { s.t. }\quad\quad &  {z}_{k+1}= \hat{f}(z_k,v_k),~ {z}_1 = \bar x_0,\quad\quad \forall k = 1,\ldots, T,\\
&g_{i}(z_k,v_k) + b_{i}  \leq 0,\\
&  \quad\quad \forall i =1, \ldots, n_c,\quad \forall k= 1,\ldots,T. \nonumber
\end{align}
\end{subequations}

We also define CP-Ball \eqref{eq:cp_ball}, which is a variant of our method which computes CP-calibrated model error bounds, except using a spherical, constant error bound representation, which is more conservative than our ellipsoidal bound:
\begin{subequations}\small
\label{eq:cp_ball}
\begin{align}
\min_{\substack{\mathbf{\Phi}_{\mathbf{x}},\mathbf{\Phi}_{\mathrm{u}},\mathbf{z}, \mathbf{v}}}\quad &  J(\mathbf{z},\mathbf{v})  + J_{\mathcal{X}_\textrm{f}}(\Z,\V) +  \tilde H_0(\Px, \Pu),\\[-6pt]
\text { s.t. }\quad\quad &  {z}_{k+1}= \hat{f}(z_k,v_k),~ {z}_1 = \bar x_0,\quad\quad \forall k = 1,\ldots, T,\\
&\Px_{k+1,j} =\bm{\nabla}_x \hat{f}\left(z_k, v_k\right)  \Px_{k,j} + \bm{\nabla}_u \hat{f}\left(z_k, v_k\right) \Pu_{k,j},\ \forall j = 1,\ldots, T, \forall k = j+1,\ldots,T,\\
&\Px_{j+1,j} = V,\quad \forall j = 1,\ldots, T,\\
& \textstyle\sum_{j=1}^{k}\big\Vert[\nabla_xg_{i}(z_k,v_k)\Px_{k,j}, \nabla_ug_{i}(z_k,v_k)\Pu_{k,j}]\big\Vert_{2} + g_{i}(z_k,v_k) + b_{i} + g_i^\textrm{lin}(z_k,v_k)  \leq 0,\\
&  \quad\quad \forall i =1, \ldots, n_c,\quad \forall k= 1,\ldots,T. \nonumber
\end{align}
\end{subequations}

\noindent\textbf{Surrogate Cost:} All SLS formulations use \eqref{eq:regulizer} to minimize the volume of the tubes--a surrogate for uncertainty, where $\Px$ and $\Pu$ collects all $\Px_{k,j}$ and $\Pu_{k,j}$ respectively, and $P^{\frac{1}{2}}, Q^{\frac{1}{2}} \in \mathbb{R}^{n_x \times n_x}, R^{\frac{1}{2}}\in\mathbb{R}^{n_u \times n_u}$ are symmetric positive definite matrices.

\begin{equation}  \label{eq:regulizer}\small 
\tilde H_0(\Px,\Pu)\defeq 
\textstyle\sum_{j=0}^{T-1} \left( \|\Qf^{\frac{1}{2}} \Px_{T,j}\|_\F^2  + \sum_{k=j}^{T-1}  \left(\| Q^{\frac{1}{2}} \Px_{k,j}\|_\F^2+ \| R^{\frac{1}{2}} \Pu_{k,j}\|_\F^2 \right)\right),
\end{equation}

\section{Experiment Setup}\label{app:training}
\subsection{Data Sampling}\label{app:exp_data}
\paragraph{Dubins' Car \& Friction Car:} To train the in-distribution Dubins' car dynamics and uncertainty models, we applied uniform sampling to the following state-control space:
\begin{equation}\label{eq:in_dist_car}[p_x, p_y, \theta, v] \times[\omega, a]\in [0, 5] \times [-5, 5] \times [-\pi, \pi] \times [-10, 10] \times [-10, 10] \times [-10, 10]. \end{equation}
Similarly, for the out-of-distribution (OOD) car, we instead sampled $p_y \in [-12, -6] \cup [6,12]$ with the remaining state and controls remaining the same as \eqref{eq:in_dist_car}. For the friction car model, we sampled $p_y \in [-5, -2] \cup [2,5]$, while using \eqref{eq:in_dist_car} for the remaining dimensions. Lastly, for our active uncertainty experiments, we also sampled from Equation \ref{eq:in_dist_car}, and enforced that $(p_x - 2.5)^2 + (p_y)^2 > 1$ to create an OOD region. For both the OOD car and active uncertainty experiments, we ensured our learned model was restricted to training on areas where modifications did not impact the car dynamics model. 

Using the sampled points, we computed the ground truth using the expert dynamics model, as formulated in Equation \ref{eq:car_dynamics}, to create our dynamics training dataset (1,000,000 points), uncertainty training dataset (1,000,000 points), and calibration datasets (varying numbers of points).

\paragraph{12D Quadcopter:} To get informative data points to train our neural-network model, we collected expert trajectories such that the states satisfied,
\begin{subequations}\label{eq:opt_expert_quad}
\begin{align}
p_x, p_y, p_z &\in [0,5]\text{ s.t. } (p_x-2.5)^2 + (p_y-2.5) + (p_z-2.5)^2 > 1 \\
    \psi, \theta, \phi &\in [-\pi/4, \pi/4]\\
\dot{p}_x, \dot{p}_y, \dot{p}_z  &\in [-5, 5]\\
p,q,r &\in [-2, 2]\\
u_1 &\in [0, -2gm],~g = -9.81 m/{s^2}, m=1.0 kg \\
u_2, u_3, u_4 &\in [-0.1, 0.1]
\end{align}   
\end{subequations}
to create an OOD region centered at (2.5, 2.5, 2.5) with a radius of 1. Additionally, the goal position was $[p_x = 4.5, p_y = 2.5, p_z = 2.5]$, while the start positions was arbitrarily sampled from Equation \ref{eq:opt_expert_quad}. Our discrete timestep was $0.05s$ and our trajectory length was 100 timesteps. We sampled 100,000 trajectories and included state-control pairs from trajectories that the optimizer (IpOpt) was able to solve. This resulted in a dynamic training and uncertainty-trained set with $4,702,400$ points each, leaving up to $100,000$ points for the calibration dataset.

\subsection{Model Architecture and Training}
For our learned model (dynamics and uncertainty), we use an MLP with one hidden layer and the tanh activation function. Our uncertainty model was trained to output Cholesky factors, ensuring the matrices were positive-definite and facilitating faster performance in our optimization problem. 

For Dubins' car and the friction car dynamics models, we used $4,096$ and $2,048$ hidden nodes for the dynamics and uncertainty models, respectively.  
Similarly, for the 12D quadcopter, we used $1,024$ hidden nodes for both the dynamics and uncertainty models, and were trained for 200 epochs. Lastly, we used a learning rate of $10^{-4}$ and $10^{-5}$ for the dynamics and uncertainty models, respectively.

\subsection{Optimization Parameters and Cost Functions}\label{app:lqr_app}
For the SLS forward propagation and constraint tightenings, we used $\bar{P}^{\frac{1}{2}} = 10^6I_{n_x}$, $\bar{Q}^{\frac{1}{2}} = 10^6I_{n_x}$ and $\bar{R}^{\frac{1}{2}} = 10^6I_{n_u}$, as found in Equation \ref{eq:regulizer}. For trajectory optimization, we considered the following LQR cost, 
\begin{equation}\label{eq:lqr}
J(\Z,\V) :=  
      \sum_{k=1}^{T-1} \left( z_k\T Q  z_k + (z_{k+1} - z_k)\T Q_{s}  (z_{k+1} - z_k) + u_k\T R {u}_k \right),
\end{equation}
\begin{equation}\label{eq:lqr_terminal}
    J_{\mathcal{X}_\textrm{f}}(\Z,\V) := (z_T - z_\text{goal})\T Q_f (z_T - z_\text{goal})
\end{equation}
where $Q_f, Q_s, Q \in \mathbb{R}^{n_x \times n_x}$ are positive-semi-definite matrices, $R\in\mathbb{R}^{n_u \times n_u}$ is a positive-definite matrix, and $z_\text{goal}$ is the goal state. For all dynamics models, we used diagonal matrices, represented as $\text{diag}(a_1,a_2, \dots, a_n)$ and $\text{block\_diag}(\dots)$ representing block diagonal matrices. For Dubins' car (excluding the active uncertainty experiment) and the friction car, we let $Q_f = \text{diag}(1,1,0,1)$, $Q = Q_s = 0_{4 \times 4}$ (the $4 \times 4$ zero matrix), and $R = \text{diag}(0.1, 0.1)$. For the Quadcopter, $Q_f = \text{block\_diag}(I_3, 0_{3\times3}, I_3, 0.2I_3)$, $Q = \text{block\_diag}(0_{3\times3}, 0_{3\times3}, 0.01I_3,0_{3\times3})$, $Q_s = 0_{12\times12}$, and $R = \text{diag}(0.01, 0.1, 0.1, 0.1)$. 

Lastly, regarding conformal prediction parameters, we used $\rho = 0.97$ for the car model and $\rho = 0.8$ for the quadcopter model, respectively. Outside of the OOD car experiment, we used $10,000$ calibration points for the car models and $30,000$ calibration points for the quadcopter model. For all the models we set $\alpha_k = \frac{0.1}{15}$ s.t. the sum total of $\alpha$ would be 0.1.

\subsection{Active Uncertainty}\label{app:uncert_reduc} To enable active uncertainty reduction, we sought to minimize the distance to the positional-state variables (i.e., $p_x$ and $p_y$) from $\calib$. To achieve efficient performance, we used K-means clustering \cite{hastie2009elements} to find a representative set of $L$ points, $\tilde{\mathcal{D}} = \{({p_x}_i, {p_y}_i)\}_{i=1}^{L}$. Then, we minimized the following cost for Dubins' car,
\begin{align}
    &J_\text{active}(\Z,\V)\nonumber\\
    &=\sum_{k=0}^{T-1} \exp\left[-\frac{1}{L + G}\left(Ge^{-\beta||({p_x}_\text{goal}, {p_y}_\text{goal}, v_\text{goal}) - ({p_x}_k, {p_y}_k, v_k)||^2} + \sum_{j=1}^{L}e^{-\beta||({p_x}_j, {p_y}_j) - ({p_x}_k, {p_y}_k)||^2}\right)\right], \label{eq:active_cost}
\end{align}
where ${p_x}_\text{goal}, {p_y}_\text{goal},\text{ and } v_\text{goal}$ are the goal x-position, y-position, and velocities, ${p_x}_k, {p_y}_k, \text{ and }v_k$ are the x-position, y-position, and velocities of the nominal point $z_k$ while ${p_x}_i, \text{ and }{p_y}_i$ are the x and y positions of points from $\tilde{\mathcal{D}}$, and $G$ is a weight for the goal node (effectively creating G points at the goal). Lastly, $\beta$ is a sharpness parameter to ensure that we minimize the distance to a particular point in $\tilde{\mathcal D}$ as opposed to minimizing the distance to all points. The inclusion of the goal point enabled us to provide a high weight to the cost without hinder the MPC optimizer from reaching the goal. Since our goal is to minimize the distance to the representative points by minimizing the cost function we scale the internal sum by $-1$. For efficient implementation we included this as a part of the quadratic cost, thus requiring the outer-exponential to provide a lower-bound to the cost. In our experiments we choose $L=800$, $G=200$, and $\beta = 3$, and our total cost was $J_\text{total}(\Z,\V) = J(\Z,\V) + J_{\mathcal{X}_\textrm{f}}(\Z,\V) + 1000J_\text{active}(\Z,\V)$. For $Q_f, Q, \text{ and } R$ we used the same parameters as the remaining Dubins' car experiments and let $Q_s = 0.1I_4$. 

\section{Theorem \ref{thm:cov_gap_thm} Analysis}\label{app:theoretical_expansion}

\begin{figure}[htbp]
    \centering
    \includegraphics[width=\textwidth]{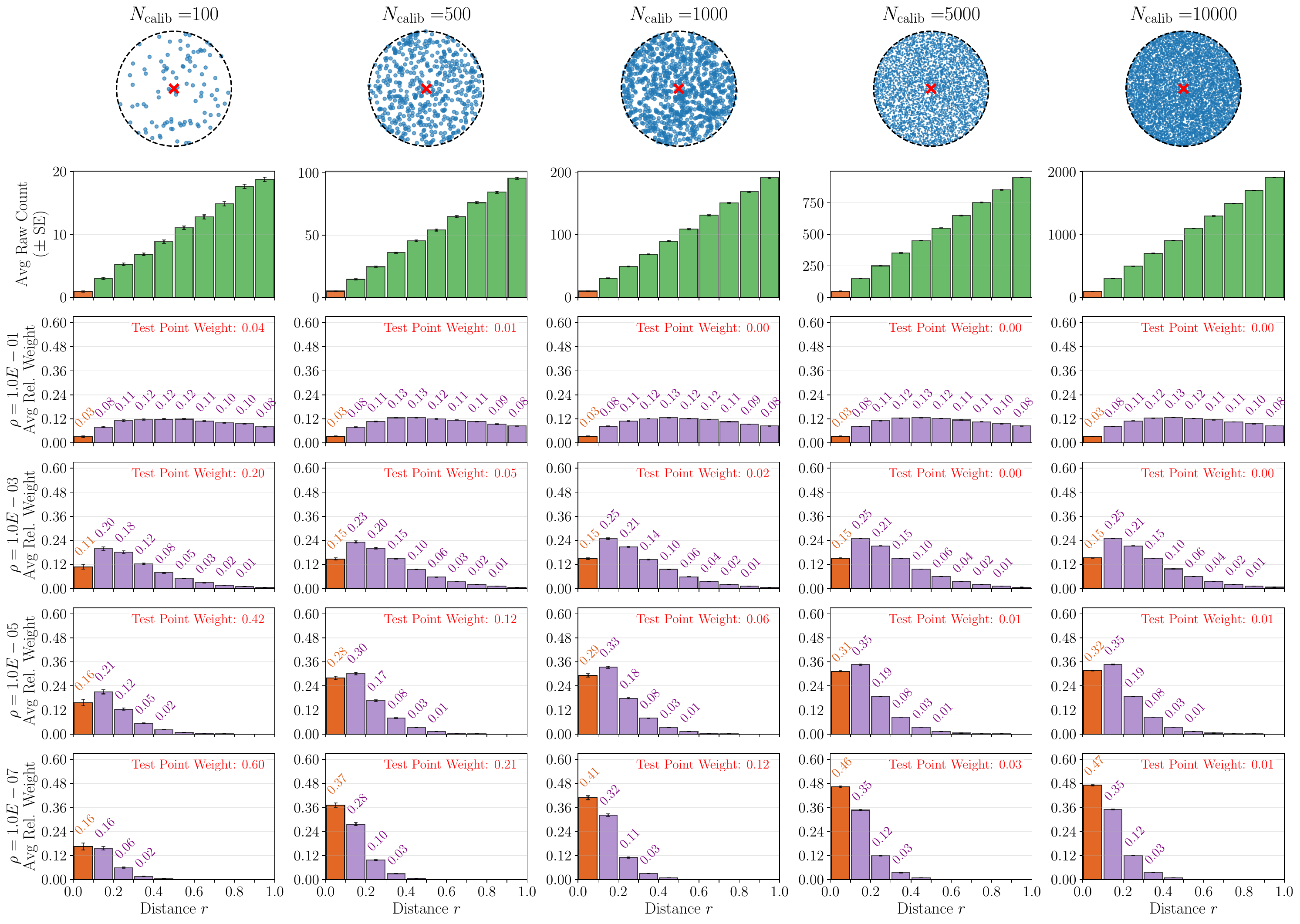}
    \caption{\textbf{Toy Example.} We plot the effect of the calibration set size, $N_\textrm{calib}$, and compare the weightage provided to points using the exponential weight decay function at different $\rho$ values. The results are averaged across 100 trials of randomly sampling $N_\textrm{calib}$ points in the unit circle, with an example of the calibration data location in the top row. The second row (a histogram) shows that, under uniform sampling on the unit circle, the number of points at each radius (Distance r) increases linearly. The remaining rows provide the weight contribution of each histogram bin by normalizing the weights (as discussed in Section~\ref{sec:cp_err_bound}) and summing the normalized weights for each bin. We observe that smaller $\rho$ values place more emphasis on points closer to the test point, and the weight of the test point, represented by a point mass at $\infty$ in the empirical score distribution, increases. Lastly, in the plot, red denotes the test point, while orange highlights the bin of points closest to it.}
    \label{fig:point_distribution}
\end{figure}

For completeness we restate the \textit{Coverage Gap} bound in Theorem \ref{thm:cov_gap_thm}: 
$$\text{Coverage Gap}\le \underbrace{\sum_{i=1}^{N_\textrm{calib}} \left( \frac{\rho^{d_i}}{1 + \sum_{j=1}^{N_\textrm{calib}} \rho^{d_j}} \right) \cdot 2\epsilon \cdot d_i}_{\text{Tight Bound}} \le \underbrace{2 \epsilon \left[  \frac{d_1 }{1 - \rho^{d_\text{min}}} + \frac{d_\text{max}\rho^{d_\text{min}}}{(1-\rho^{d_\text{min}})^2}\right]}_{\text{Interpretable Bound}}$$
As discussed in Section \ref{sec:theory}, an increase in the density of state-control calibration points reduces $d_\textrm{min}$, thereby increasing the (interpretable) coverage gap bound in \eqref{eq:thm3_bound}. We can reduce the bound by reducing $\rho$. To build intuition, on the results, we consider the following toy example: we have calibration points uniformly sampled from a unit circle with a test point located at the origin, $(0,0)$, such that the distribution of the non-conformity score is spatially varying (i.e. $\tvd \neq 0$). Without defining the distribution, Figure \ref{fig:point_distribution} demonstrates that at smaller $\rho$ values, greater emphasis is provided to non-conformity scores located near the test point, with smaller TV distances to the test point. Hence, the coverage gap decreases for a fixed $N_\textrm{calib}$ as $\rho$ decreases. The interpretable bound also implies that when $d_\textrm{min}$ is reduced, the coverage gap increases, which can be compensated for by requiring $\rho$ to decrease. Figure \ref{fig:point_distribution}, illustrates that the coverage gap increases at larger $d_\textrm{min}$ stemming from increased data density and $N_\textrm{calib}$, due to a higher weightage being provided to the test point, represented as a point-mass at $\infty$ in the empirical weighted non-conformity score distribution, resulting in a larger $1-\alpha$ quantile value. The larger quantile value then increases coverage, thereby reducing the coverage gap.

To understand the tightness of the derived bound, we use a gamma distribution to randomly sample non-conformity scores such that $s(x_i,y_i) \sim S(x_i,y_i)= \Gamma(\alpha=5 - \frac{1}{3}(\sqrt{x_i^2+y_i^2})^{0.8}, \theta=2.0)$, where $\mathbb{E}[s(x_i,y_i)] = 10 - \frac{2}{3}(\sqrt{x_i^2+y_i^2})^{0.8}$ and $\textrm{Var}[s(x_i,y_i)]=20 - \frac{4}{3}(\sqrt{x_i^2+y_i^2})^{0.8}$. To estimate $\epsilon$, we randomly sampled several pairs of points in the unit circle, then computed $\frac{\tvd (S(x_1,y_1),S(x_2,y_2))}{||(x_1,y_1)-(x_2,y_2)||}$ and took the maximum value computed to be $\epsilon$. Then, we sampled several $N_\textrm{calib}$ values and $\rho$ values on a log scale and computed the coverage across 100 trials. In each trial, we randomly sampled $N_\textrm{calib}$ points on the unit circle, then sampled a non-conformity score from the distribution defined by the coordinates of each sampled point. Since we know the distribution, we can directly compute the TV distance \eqref{eq:init_barber} from \citet{barber2023conformal}. We also computed the tight bound in \eqref{eq:thm3_bound} using the estimated $\epsilon$ value. 

\begin{figure}[htbp]
    \centering
    \includegraphics[width=\textwidth]{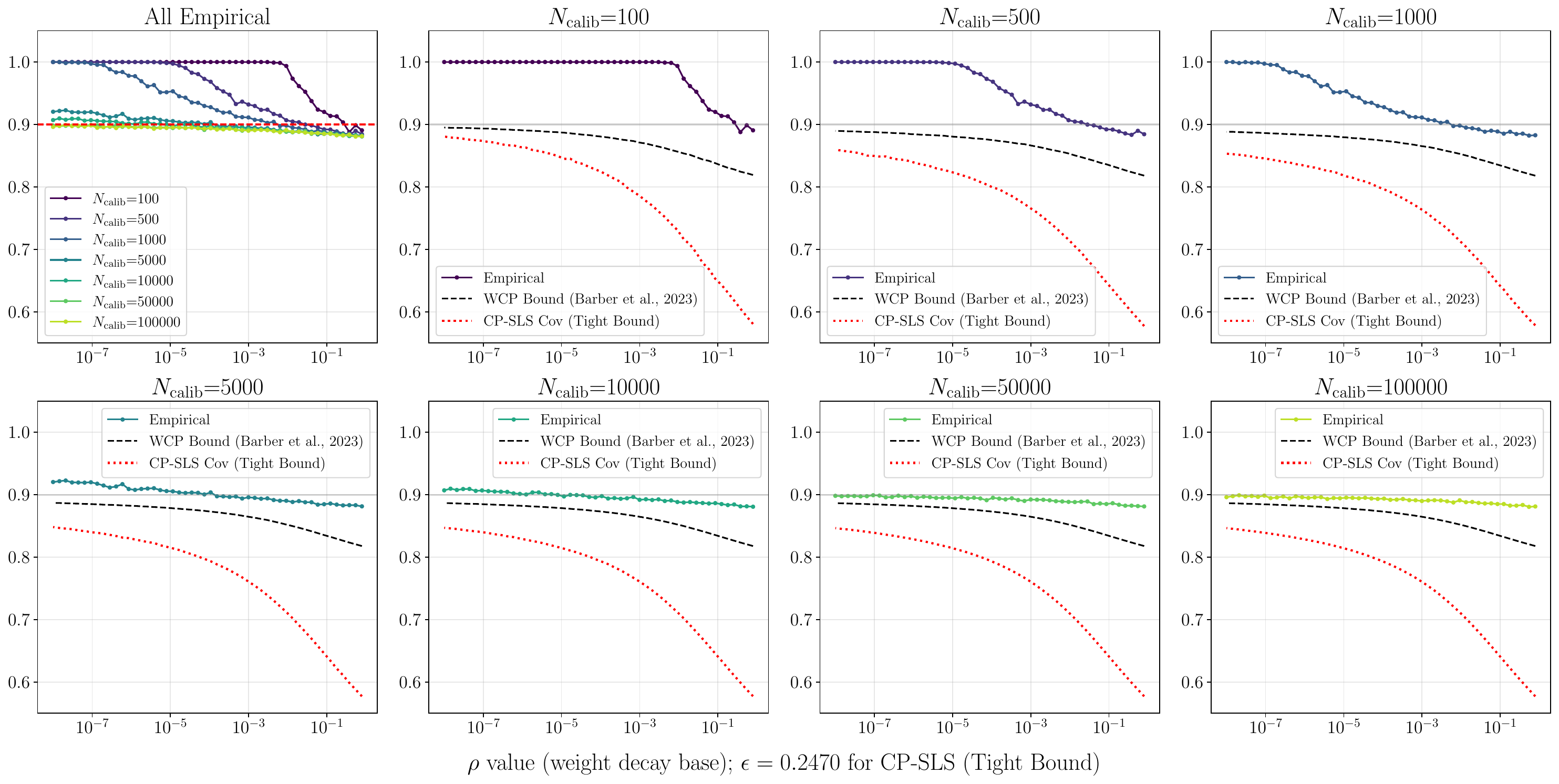}
    \caption{\textbf{Toy Example.} We plot the empirical coverage, along with the theoretical bounds from \eqref{eq:init_barber}, \citet{barber2023conformal}, and our derived bound (labeled) as CP-SLS cov in \eqref{eq:thm3_bound} for $\rho$ values selected on a log scale and several $N_\textrm{calib}$ values. We observe that the lower bounds and empirical coverage increase as $\rho$ decreases, and that the bounds increase more rapidly for smaller $\rho$ values.}
    \label{fig:toy_example_empirical}
\end{figure}

The results in Figure \ref{fig:toy_example_empirical} demonstrate that the lower bound on coverage ($=1-\alpha - \textit{Coverage Gap}$) increases for large $\rho$ as well as the empirically observed coverage. Additionally, the tightness of our derived bound, relative to \citet{barber2023conformal}, improves for smaller $\rho$ values. Similarly, we find the lower-bound increases for fixed $\rho$ values as $N_\textrm{calib}$ increases. While these results suggest a smaller $\rho$ is optimal, for coverage, it is essential to consider the utility sacrifice. Smaller $\rho$ values lead to larger, potentially infinite, values being computed as the $1-\alpha$ quantile, thereby making the uncertainty set harder to use and potentially impractical.  

\newpage
\section{Additional Results}\label{app:additional_results}

\begin{figure}[!h] 
    \centering
    
    \begin{minipage}{1.0\textwidth}
        \centering
    \includegraphics[width=\textwidth]{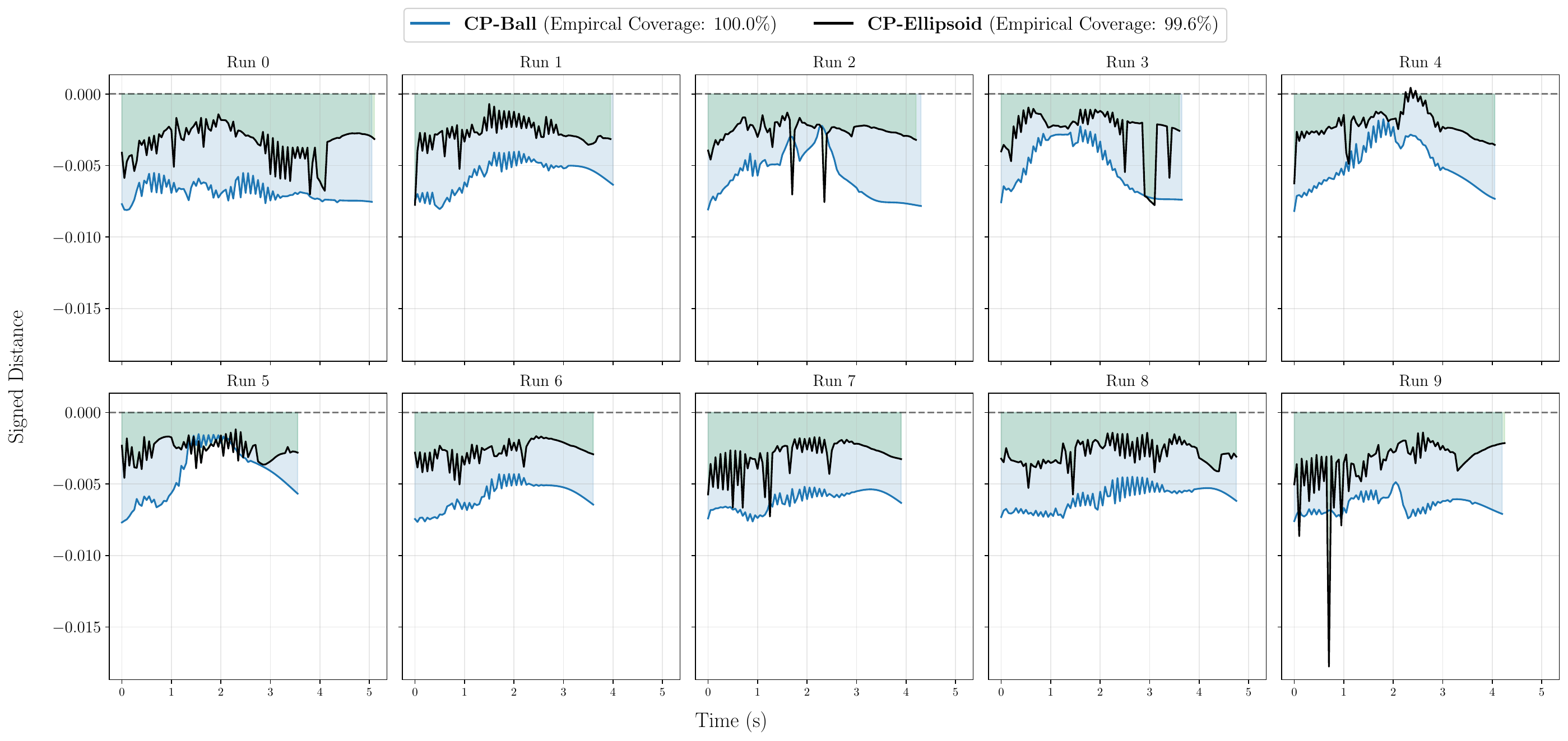}
    \caption{\textbf{In Domain Car.} We plot the minimum distance of the prediction error to the edge of the ellipsoid. Both the fixed ball and adaptive ellipsoid method empirically satisfy the coverage guarantee and remain inside over $99.33\% = (1- \frac{0.1}{15})\times100\%$ of the time.}
    \label{fig:ellip_in_domain}
    \end{minipage}

    
    \begin{minipage}{1.0\textwidth}
        \centering
    \includegraphics[trim={0cm 0.5cm 0cm 1.3cm}, clip, width=\textwidth]{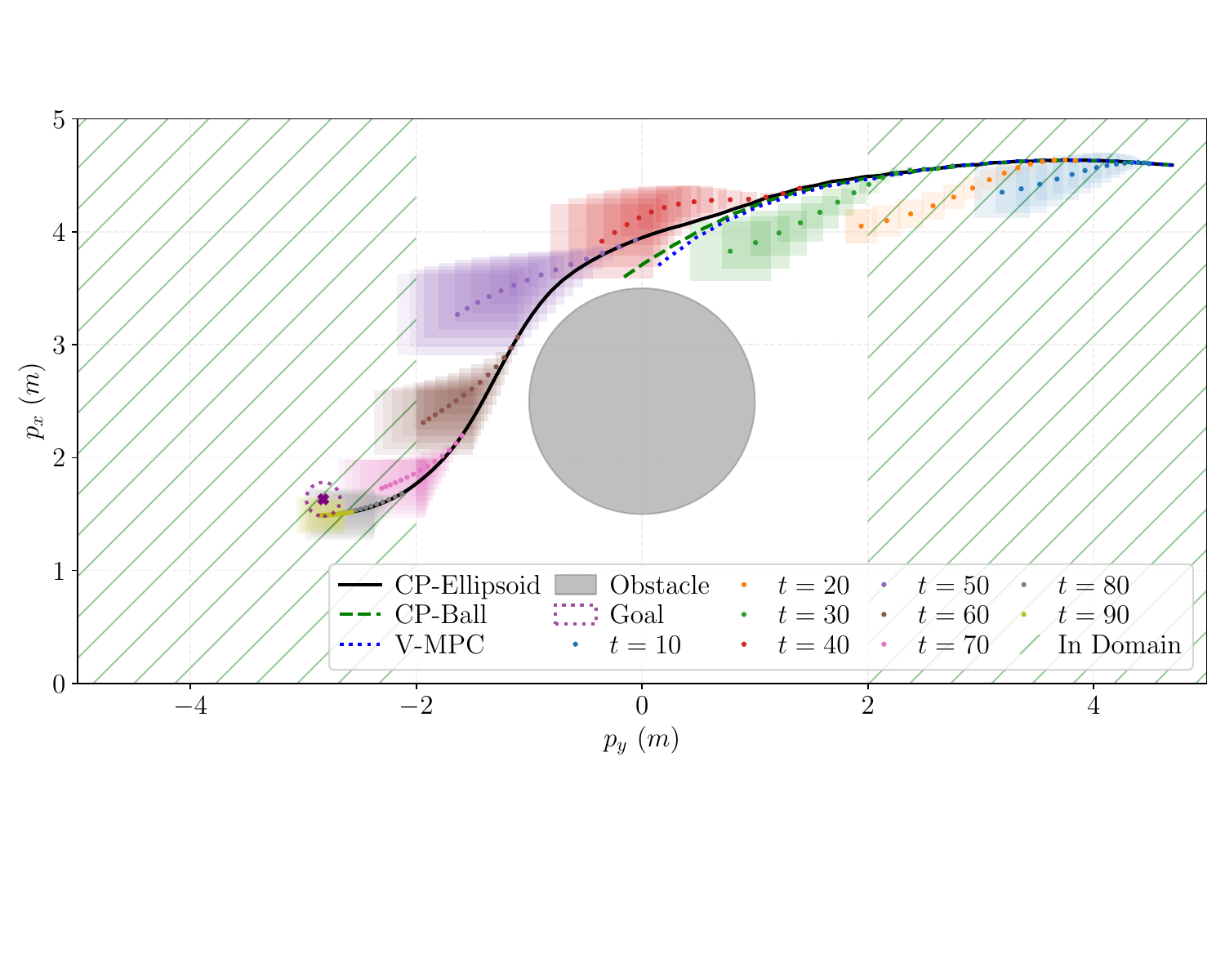}
    \vspace{-3cm}
    \caption{\textbf{Friction Car.} We provide an example run of the friction area where the adaptive ellipsoid approach is successful in avoiding the obstacle. The remaining approaches fail because they do not maintain sufficient obstacle proximity and thus crash into the obstacle.}
    \label{fig:second_run_figure}
    \end{minipage}
\end{figure}
\clearpage

\begin{table}[h]
    \small
  \centering
  \caption{\textbf{Active Uncertainty.} The computation times (ms), and prediction errors when planning with vanilla-MPC, the adaptive ellipsoid with and without active uncertainty. }
  \label{tab:active_uncert}
  \begin{tabular}{l|ccc}
    \hline
    \textbf{Model} & \textbf{V-MPC} & \textbf{CP-Ellipsoid} & \textbf{CP-Ellipsoid + Active Uncertainty} \\
    \hline
    \textbf{Time (ms)} & 56.1 $\pm$ 1.1 & 132.0 $\pm$ 9.9 & 131.4 $\pm$ 8.2 \\
    \hline
    \textbf{Pred. Err. ($L_2$) ($\times 10^{-2}$)} & 2.82 $\pm$ 7.85 & 2.83 $\pm$ 7.89 & .34 $\pm$ .59 \\
    \hline
  \end{tabular}
\end{table}


\begin{figure}[p] 
    \centering
    
    \begin{minipage}{1.0\textwidth}
        \centering
    \includegraphics[trim={0cm 2cm 0cm 2cm}, clip,width=0.95\textwidth]{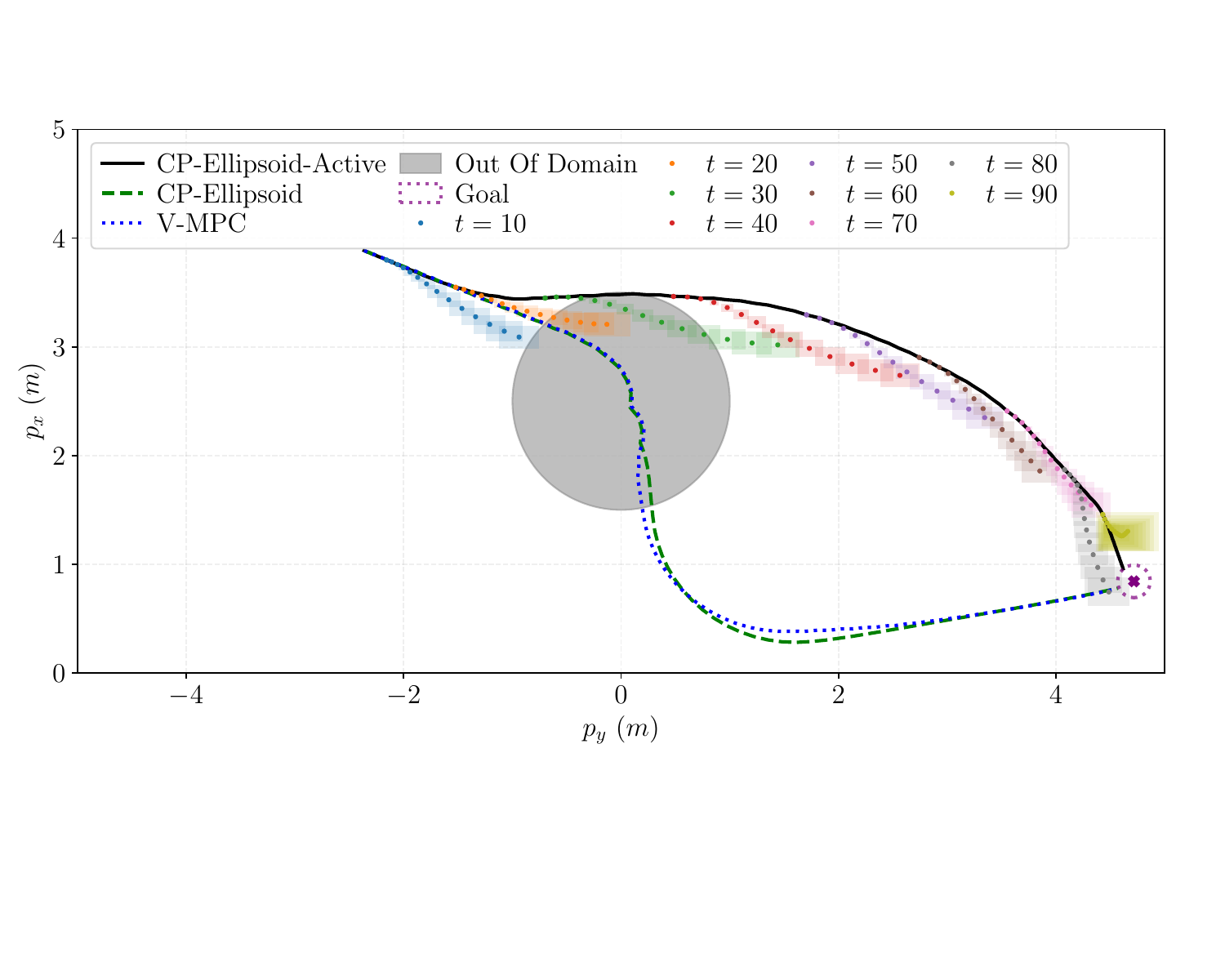}
    \vspace{-1.8cm}
    \caption{\textbf{Active Uncertainty.} We provide an example run of the rollouts when using V-MPC and CP-Ellipsoid with active uncertainty. We find that the active uncertainty approach mostly avoids the OOD region, and thus has a smoother, more certain trajectory.}
    \label{fig:rollout_example_1_active_uncert}
    \end{minipage}

    \begin{minipage}{1.0\textwidth}
        \centering
    \includegraphics[trim={0cm 2cm 0cm 2cm}, clip,width=0.95\textwidth]{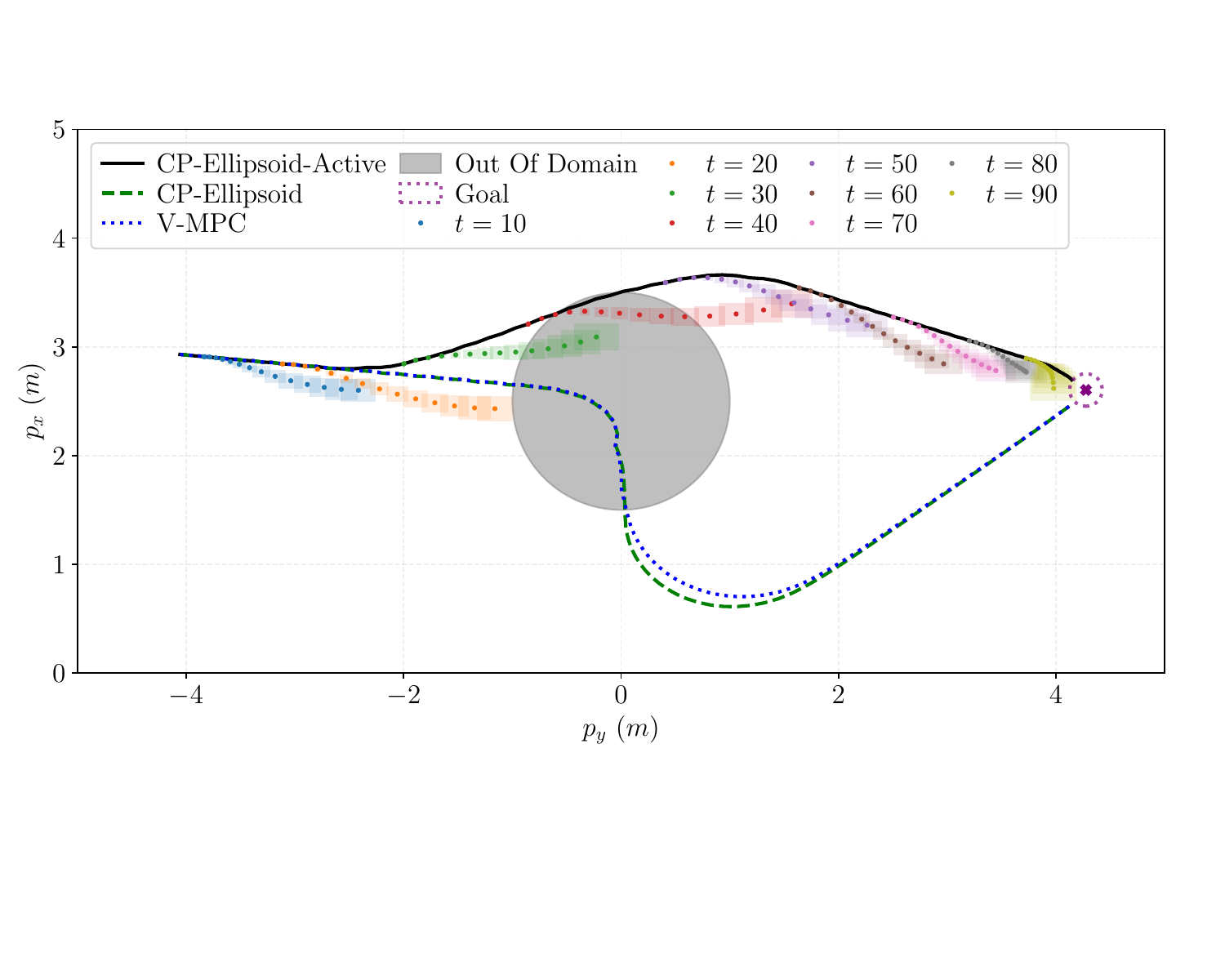}
    \vspace{-1.8cm}
    \caption{\textbf{Active Uncertainty.} We provide another example run of the rollouts when using V-MPC and CP-Ellipsoid with active uncertainty. Similar to Figure \ref{fig:rollout_example_1_active_uncert}, the active uncertainty rollout is smoother by mitigating time in the OOD region. Although the active uncertainty approach has more steps in the OOD region than shown in Figure \ref{fig:rollout_example_1_active_uncert}, it is near the in-distribution region and thus encounters limited OOD errors during the rollout.}
    \label{fig:rollout_example_2_active_uncert}
    \end{minipage}
\end{figure}
\begin{figure}[!h]
    \centering
    \includegraphics[width=\textwidth]{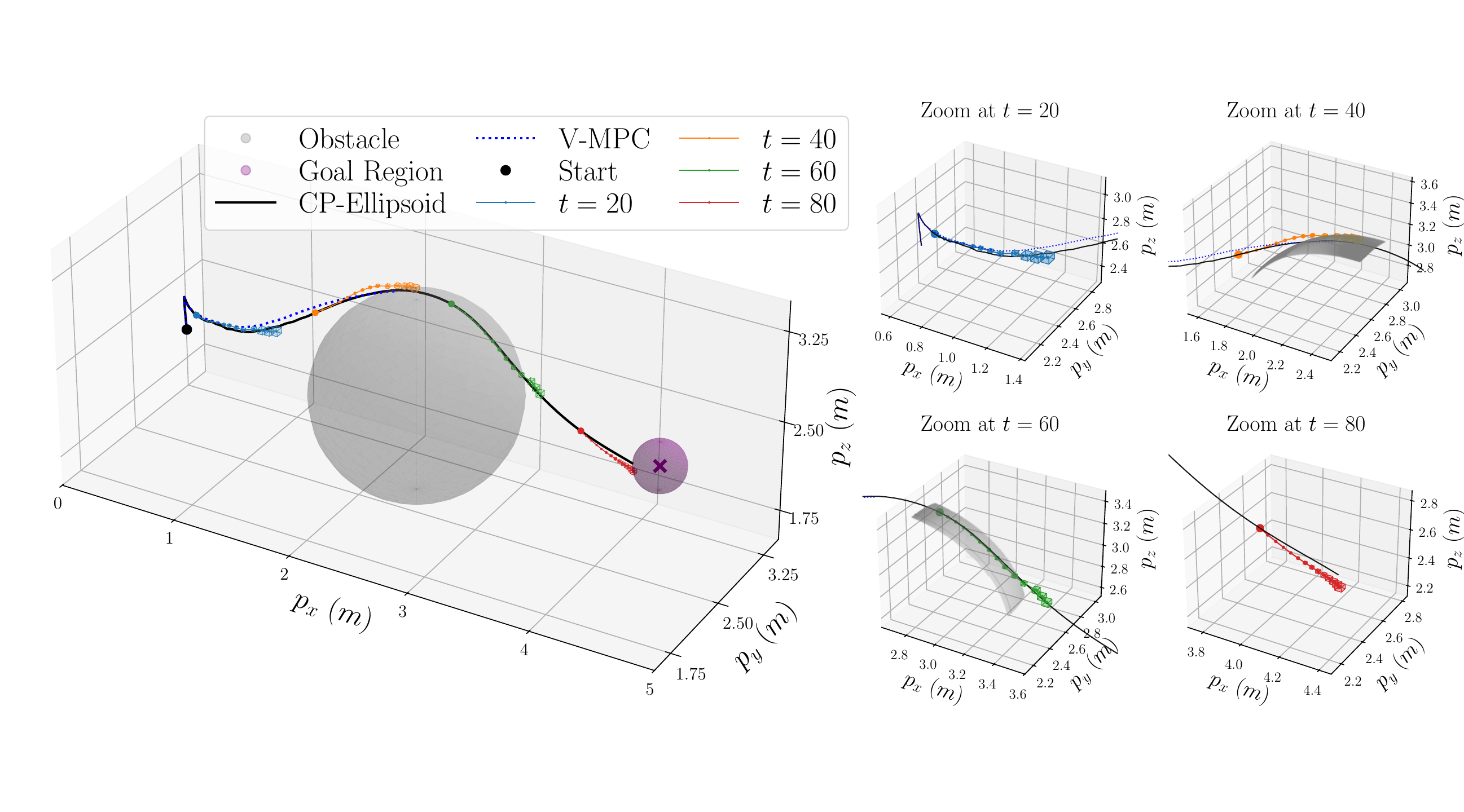}
    \caption{\textbf{Quadcopter.} We provide another example run of the quadcopter rollout, with a different initial condition than in Figure \ref{fig:quad_trajectory}. Again, we observe the quadcopter is successful in avoiding the obstacle when using CP-Ellipsoid (unlike V-MPC) when entering the OOD region. }
    \label{fig:quad_traj_additional_result}
\end{figure}
\end{document}